\theoremstyle{plain}
\newtheorem{theorem}{Theorem}
\theoremstyle{definition}
\theoremstyle{remark}
\theoremstyle{plain}
\newtheorem*{theorem*}{Theorem}
\newtheorem*{proposition*}{Proposition}
\newtheorem*{lemma*}{Lemma}
\newtheorem*{corollary*}{Corollary}
\theoremstyle{definition}
\newtheorem*{definition*}{Definition}
\newtheorem*{assumption*}{Assumption}
\theoremstyle{remark}
\newtheorem*{remark*}{Remark}
\definecolor{bigaired}{RGB}{156, 0, 0}
\definecolor{uclablue}{RGB}{39, 116, 174}
\definecolor{thupurple}{RGB}{102, 8, 116}
\definecolor{pkured}{RGB}{139, 0, 18}
\definecolor{panton}{RGB}{217, 51, 121}
\definecolor{darkred}{RGB}{200, 0, 0}
\definecolor{darkblue}{RGB}{0, 0, 200}
\definecolor{blue}{RGB}{0, 0, 200}
\definecolor{light}{RGB}{225, 250, 250}
\definecolor{lightgray}{RGB}{0.9, 0.9, 0.9}
\definecolor{lightred}{RGB}{250, 200, 200}
\definecolor{lightblue}{RGB}{210, 220, 250}
\definecolor{lightpurple}{RGB}{218,210,255}
\definecolor{doderblue}{RGB}{30, 144, 255}
\definecolor{select}{RGB}{222, 235, 247}
\definecolor{unselect}{RGB}{247, 207, 206}
\definecolor{myLinkColor}{RGB}{0, 0, 200}     
\definecolor{myCiteColor}{RGB}{0, 0, 200}     
\definecolor{myURLColor}{RGB}{0, 0, 200}      
\definecolor{azblue}{RGB}{27,117,187}      
\definecolor{bestcol}{RGB}{  0,102,204} 
\definecolor{goodcol}{RGB}{ 34,139, 34} 
\definecolor{deltaBg}{RGB}{220,230,255} 
\definecolor{lightgreen}{RGB}{0,150,0}  
\newtheoremstyle{rqstyle}%
  {\topsep}            
  {\topsep}            
  {}                   
  {}                   
  {\bfseries}    
  {:}                  
  {.5em}               
  {}                   
\theoremstyle{rqstyle}
\crefname{researchquestion}{Research Question}{Research Questions}
\definecolor{propose}{HTML}{EF8E8D}
\definecolor{solve}{HTML}{5755A3}
\definecolor{humanred}{RGB}{180, 50, 50}
\definecolor{envgreen}{RGB}{50, 140, 80}
\definecolor{paleviolet}{HTML}{E1EEFC}
\definecolor{lightgrey}{RGB}{247, 247, 247}
\newenvironment{leapabstract}{
  \begin{tcolorbox}[
    colback=lightgrey,
    colframe=white,
    boxrule=0pt,
    arc=10pt,
    left=16pt,
    right=16pt,
    top=12pt,
    bottom=12pt,
    width=\textwidth,
    enlarge left by=0mm,
    before skip=10pt,
    after skip=10pt
  ]
  \normalsize
}{
  \end{tcolorbox}
}
\DeclareRobustCommand\onedot{\futurelet\@let@token\@onedot}
\def\@onedot{\ifx\@let@token.\else.\null\fi\xspace}
\begin{document}

\makeatletter
\def\icmldate#1{\gdef\@icmldate{#1}}
\icmldate{\today}
\makeatother

\makeatletter
\fancypagestyle{fancytitlepage}{
  \fancyhead{}  
  \lhead{\includegraphics[height=1.5cm]{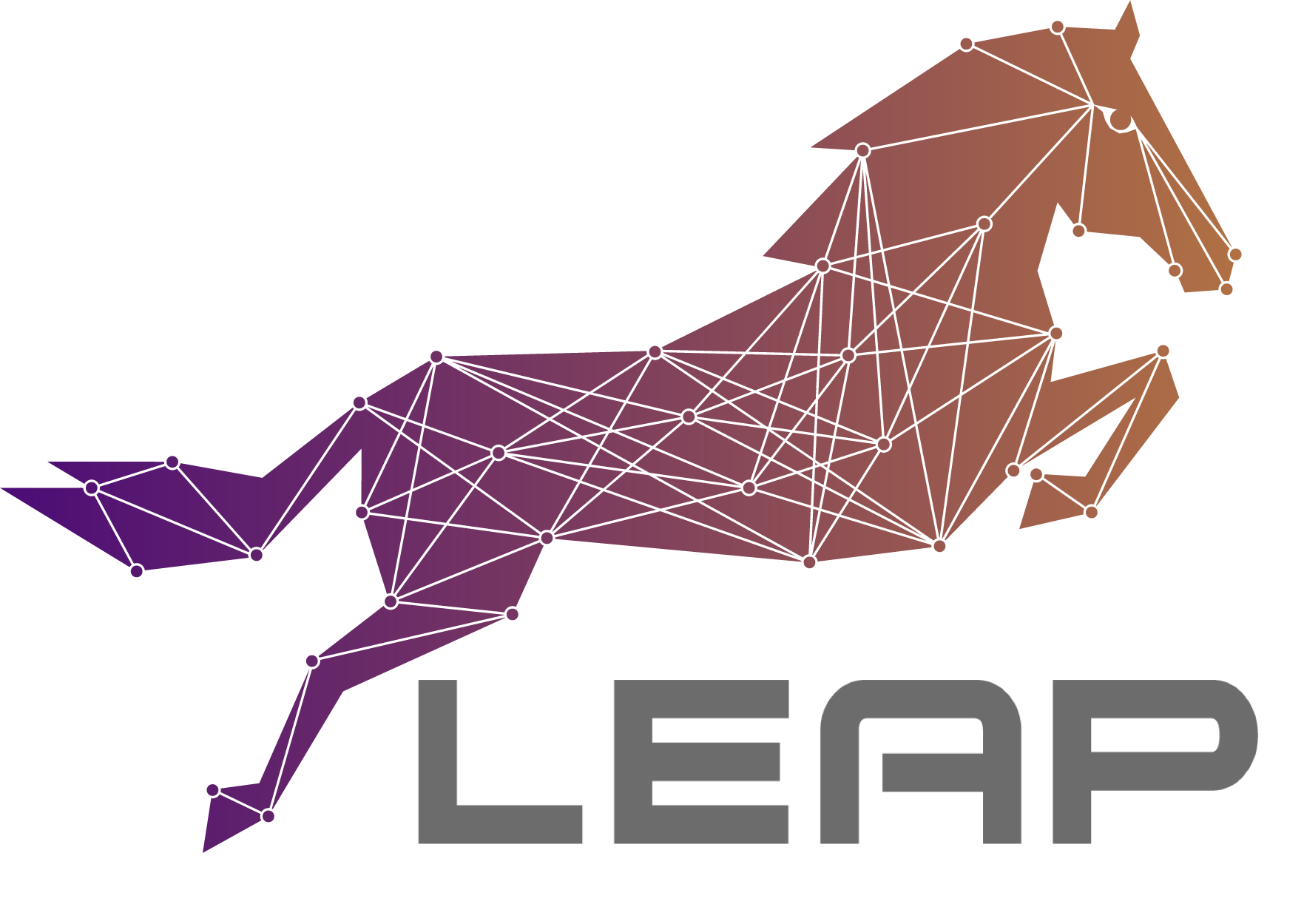}\hspace{5mm}}
  \rhead{\it \@icmldate}
  \cfoot{}
}
\makeatother


\icmltitlerunning{
    Emulating Human-like Adaptive Vision for 
    Efficient and Flexible Machine Visual Perception
}
\vskip 0.05in
\icmltitle{
    Emulating Human-like Adaptive Vision for \\[0.65ex]
    Efficient and Flexible Machine Visual Perception
}


\begin{icmlauthorlist}
\mbox{
  Yulin Wang$^{\dagger}$
  }
\mbox{
  Yang Yue$^{\dagger}$
  }
\mbox{
  Yang Yue$^{\dagger}$
  }
\mbox{
  Huanqian Wang
  }
\mbox{
  Haojun Jiang
  }
\mbox{
  Yizeng Han
  }
\mbox{
  Zanlin Ni
  }
\mbox{
  Yifan Pu
  }
\mbox{
  Minglei Shi
  }
\mbox{
  Rui Lu
  }
\mbox{
  Qisen Yang
  }
\mbox{
  Andrew Zhao
  }
\mbox{
  Zhuofan Xia
  }
\mbox{
  Shiji Song\ \!$^{\textrm{\Letter}}$
  }
\mbox{
  Gao Huang\ \!$^{\textrm{\Letter}}$
  }
\end{icmlauthorlist}
{
  \small
  Learning And Perception (LEAP) Lab, Department of Automation, Tsinghua University\\
  ${\dagger}$ Equal Contribution. \ \ \ \ 
  ${\textrm{\Letter}}$ Corresponding Authors: Shiji Song, Gao Huang.\\
  \texttt{\{yulin-wang, shijis, gaohuang\}@tsinghua.edu.cn}.
}




\begin{leapabstract}
    Vision is fundamental to our interpretation of the intricate physical world \cite{biederman1972perceiving, sperling1978attention, sagi1985and, moran1985selective, olveczky2003segregation, moore2003selective, najemnik2005optimal, carrasco2011visual, wolfe2017five}. 
    Computationally replicating human visual perception capabilities is crucial for modern artificial intelligence (AI), such as multimodal large language models (MLLM) \cite{alayrac2022flamingo, openai2023gpt4, team2023gemini, lu2024multimodal}, embodied AI agents \cite{kaufmann2023champion, zitkovich2023rt, open_x_embodiment_rt_x_2023, gehrig2024low}, and medical AI \cite{chen2020deep, xu2024whole, wang2024pathology, schafer2024overcoming}, and also carries significant implications for cognitive science \cite{lake2015human, orhan2024learning, vong2024grounded}. 
    Current methods have demonstrated notable success in addressing challenging vision tasks by continuously scaling up input complexity (\emph{e.g.,} spatial-temporal resolutions) and model size \cite{he2022masked, kirillov2023segment, pmlr-v202-dehghani23a, liu2024improved, oquab2024dinov}, at the price of dramatically growing resource demands. 
    Information-rich, high-dimensional visual inputs, large-scale neural networks, and efficiency have converged to an `impossible triangle' that impedes both future advancements of computer vision and its adoption in diverse real-world scenarios such as robotics, wearable devices, and industrial inspections, even posing a risk to human life by making high-latency decisions in safety-critical domains like autonomous vehicles and medical robots. 
    Here we find this dilemma may be rooted in the current prevailing representation learning paradigm established decades ago \cite{rumelhart1986learning, lecun1989handwritten, lecun1995convolutional, lecun1998gradient}: the model passively receives an input, and processes the whole input in its entirety at once, yielding computational and memory costs that scale linearly or quadratically with pixel numbers.
    To address this, we take inspiration from the human visual system and introduce an AdaptiveNN framework, aiming to drive a paradigm shift from `passive' to `active' vision models. AdaptiveNN formulates visual perception as a coarse-to-fine sequential decision-making process, progressively identifying and fixating on regions pertinent to a given task, incrementally combining information across fixations, and actively concluding its observation when sufficient to accomplish the task. 
    Hence, akin to human vision, large models can be employed for superior capabilities, yet their inference remains low-cost since they only process a minimally necessary subset of regions within the complex scenes. 
    We introduce a novel theoretical analysis integrating representation learning with self-rewarding reinforcement learning, which enables training the non-differentiable AdaptiveNN in end-to-end without relying on specialized task structures or additional annotations beyond standard objectives.
    AdaptiveNN is assessed across 17 benchmarks organized into 9 different tasks, including large-scale visual understanding, fine-grained recognition, visual search, processing images from real driving and medical scenarios, MLLM for language-driven embodied AI, and multiple side-by-side comparisons with humans. 
    AdaptiveNN reduces the inference cost of well-performing models by up to $28\times$ without sacrificing accuracy, especially effective for processing complicated real-world scenes, and for employing large models. It also exhibits marked behavioral flexibility to adapt to varying task instructions and fluctuating resource availability without re-training, and achieves strong interpretability through analyzing its fixation patterns. 
    Furthermore, the perceptual behaviors of AdaptiveNN are indistinguishable from people in many cases, uncovering its potential as a useful instrument for investigating human visual cognition.
    Our findings reveal that incorporating human-like adaptiveness offers a promising avenue toward the next generation of efficient, flexible, and interpretable machine vision paradigms, while also providing valuable insights for the cognitive science community.
    Code is available at \url{https://github.com/LeapLabTHU/AdaptiveNN}.

    
\end{leapabstract}


\vskip -0.1in
\section{Main}
\label{sec:main}

Through visual perception, humans interpret complex surrounding environments, learn knowledge about how the physical world works, connect language or concepts with tangible objects and scenes, and guide their behaviors \cite{biederman1972perceiving, sperling1978attention, sagi1985and, moran1985selective, olveczky2003segregation, moore2003selective, najemnik2005optimal, carrasco2011visual, wolfe2017five, lecun2022path}.
Computationally acquiring these visual perception capabilities of humans has been crucial for advancing modern artificial intelligence, such as developing multimodal large language models (MLLM) that understand visual inputs \cite{alayrac2022flamingo, openai2023gpt4, team2023gemini, lu2024multimodal}, embodied AI agents that perceive and interact with the real world \cite{kaufmann2023champion, zitkovich2023rt, open_x_embodiment_rt_x_2023, gehrig2024low}, and AI applications in pathology, radiology, and medical robots \cite{chen2020deep, xu2024whole, wang2024pathology, schafer2024overcoming}.
Computer vision also presents significant opportunities for exploring fundamental questions in cognitive science, such as the role of innateness in human vision \cite{lake2015human, orhan2024learning, vong2024grounded}. 

Over the past decades, computational visual perception models have exhibited substantial progress, approaching or even exceeding expert-level performance across a broad range of fields, including large-scale image recognition \cite{russakovsky2015imagenet, he2016deep, huang2017densely, dosovitskiy2021an, pmlr-v202-dehghani23a}, object detection \cite{zou2023object}, open-world visual recognition \cite{radford2021learning}, medical image analysis \cite{isensee2021nnu, tiu2022expert, zhou2023foundation, xu2024whole, wang2024pathology, schafer2024overcoming}, and multimodal content understanding \cite{alayrac2022flamingo, openai2023gpt4, team2023gemini, lu2024multimodal}.
These achievements are founded on the paradigm of representation learning, where parameterized functions are learned to transform raw pixelated images into semantically meaningful representations. This idea originated at least four decades ago \cite{rumelhart1986learning}, but has become dramatically more powerful today because of the breakthroughs in algorithms and hardware, enabling training vastly deeper and larger neural networks to effectively harness large-scale, fine-grained digital visual signals with much higher spatial-temporal resolution \cite{he2022masked, kirillov2023segment, pmlr-v202-dehghani23a, liu2024improved, oquab2024dinov}.
These advancements have sparked interest in deploying deep networks in diverse real-world applications, such as generalist multimodal AI copilots \cite{openai2023gpt4, team2023gemini, lu2024multimodal}, autonomous vehicles and robotics \cite{chen2020deep, kaufmann2023champion, zitkovich2023rt, open_x_embodiment_rt_x_2023, gehrig2024low}, wearable devices \cite{du2015shidiannao, bai2017smart}, mobile applications \cite{howard2017mobilenets, sandler2018mobilenetv2, huang2018condensenet}, and edge computing \cite{chen2019deep, wang2020convergence, murshed2021machine}.

However, models achieving state-of-the-art accuracy often fall short in meeting the demands of real-world applications that extend beyond simple performance metrics. For example, in scenarios like robotics, mobile AI copilots, and industrial inspections, the hardware deployment platforms typically face constraints on computational capability, memory space, and battery capacity, yet the AI systems usually necessitate acting in real-time and performing low-latency interactions with human users and physical environments. In contrast, the inference of large computer vision models demands substantial resources, as it involves activating millions or billions of parameters to process high-resolution images with high frame rates, leading to tremendous power consumption, considerable GPU memory requirements, and nontrivial time delays. These limitations make it challenging to deploy highly capable, scaled-up models in real systems, and may even pose a risk to human life by making high-latency decisions in safety-critical domains like autonomous driving and medical robots. While cloud computing could offer some solutions, it introduces notable network latency and dependence on high-bandwidth, real-time wireless communications. Furthermore, large-scale inference requests of computationally intensive models ultimately translate to a significant rise in carbon emissions, which should be minimized for environmental reasons \cite{bourzac2024fixing}.

\begin{figure*}[!t]
    \centering
    \includegraphics[width=\textwidth]{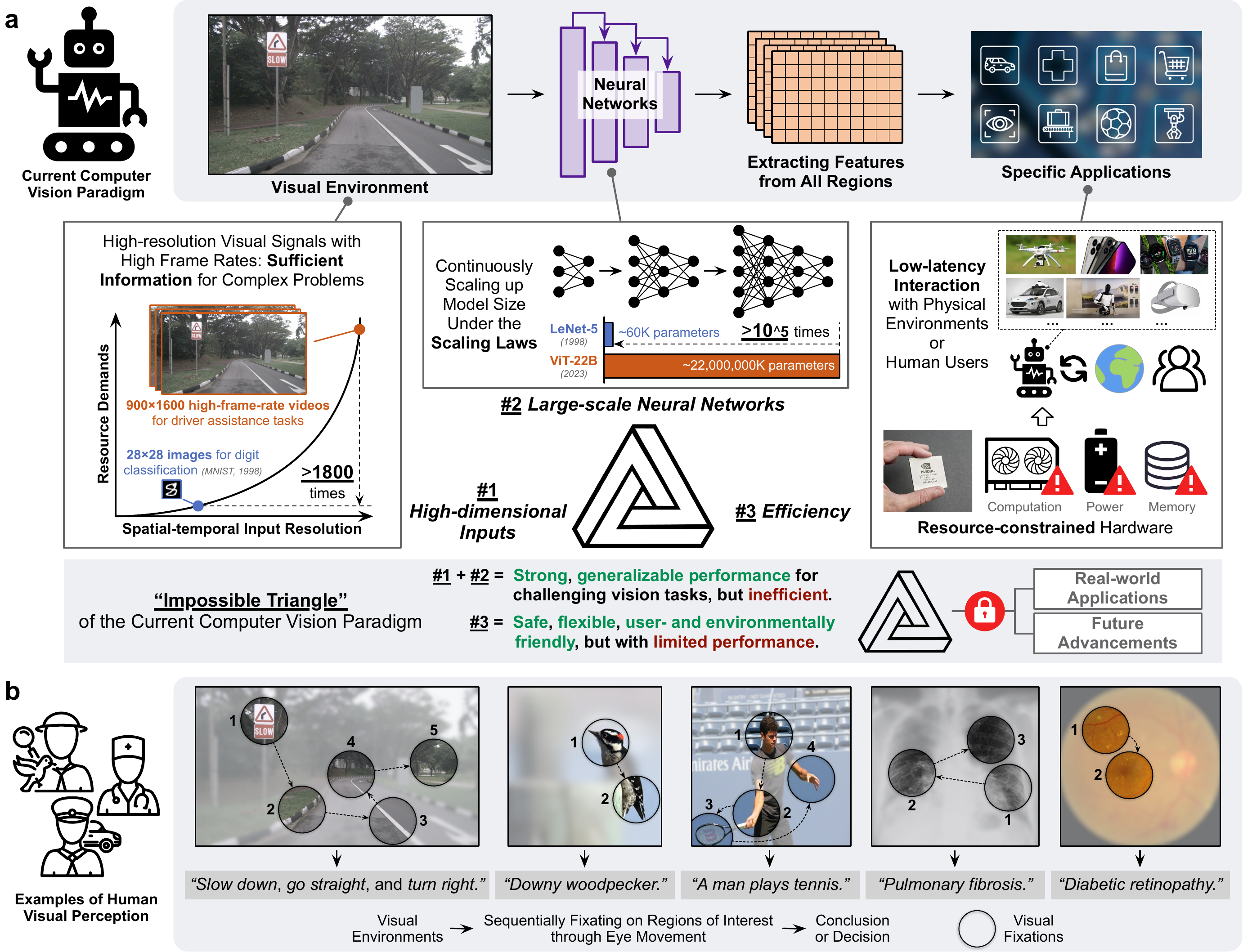}
    \caption{
        \textbf{The `impossible triangle' faced by the current paradigm of computational visual perception.}
        \textbf{(a) \textit{(top half)}} The current prevailing paradigm established decades ago \cite{rumelhart1986learning, lecun1989handwritten, lecun1995convolutional, lecun1998gradient}: a model processes the whole image in its entirety at once, with all pixels fed into neural networks simultaneously and processed parallelly, extracting features from all regions for downstream applications. All regions are equivalent in computation.
        \textbf{(a) \textit{(bottom half)}} However, an `impossible triangle' has emerged under this paradigm, which impedes both future advancements and adoption in diverse real-world scenarios. Specifically, continuously scaling up model size and input complexity (\emph{e.g.}, spatial-temporal resolutions) yields superior capabilities for addressing challenging real-world vision tasks, but usually compromises efficiency, leading to dramatically growing resource demands.
        \textbf{(b)} The human visual system circumvents this `impossible triangle' by utilizing an active and adaptive perception strategy, which does not process everything everywhere all at once. Instead, human vision only acquires information when and where it is needed, which is implemented by sequentially sampling the optic array, progressively directing a high-resolution fovea toward a few regions of interest through eye movement, until the observation is sufficient.
        \label{fig:fig1}
        }
    \vskip 0.2in
\end{figure*}

A foundational source of these aforementioned inefficiencies is rooted in a prevalent routine within the current computer vision community, which stems from a straightforward extension of the basic representation learning paradigm established decades ago \cite{rumelhart1986learning, lecun1989handwritten, lecun1995convolutional, lecun1998gradient}: models usually process a whole image or video in its entirety at once, where all pixels of the input are fed into a model simultaneously and processed parallelly, equivalent in computation (Fig. \ref{fig:fig1}a). Consequently, the computational complexity and memory requirements of the same model, for both training and inference, scale linearly with the number of pixels, and thus quadratically with respect to the image height or width. Historically, this posed little concern two or three decades ago, when small neural networks with merely thousands of parameters were employed to classify tiny images, such as 28$\times$28 black and white handwritten digits \cite{lecun1998mnist, lecun1989handwritten, lecun1995convolutional, lecun1998gradient}. However, this has evolved into a critical limitation in modern contexts, as current models have been enlarged by 5-6 orders of magnitude in terms of parameters, necessitating proficiency in processing complex scenes or videos sourced from real-world environments or the internet \cite{pmlr-v202-dehghani23a, oquab2024dinov}. For example, compared with 28$\times$28 images, 224$\times$224, a small yet reasonable size for common web images depicting individual objects or animals \cite{russakovsky2015imagenet}, results in a 64 times increase in computational and memory demands, while 900$\times$1600, a relatively small size for depicting common urban driving scenes, incurs a more than 1,800 times larger resource cost. The challenge of inefficiency becomes even more exacerbated with the recent revealing of the scaling laws of neural networks \cite{kaplan2020scaling, openai2023gpt4, chen2024internvl, oquab2024dinov}, that is, continuously scaling up model size may be essential for acquiring strong, generalizable capabilities across diverse tasks. This finding, coupled with the introduction of high-resolution inputs, is driving computational and memory requirements to unaffordable levels. In summary, the increasing demands of higher spatial-temporal resolution for inputs, the rise of larger-scale models, and the necessity of efficiency in real-world applications have formed an `impossible triangle' (Fig. \ref{fig:fig1}a), which emerges as a major bottleneck faced by the current paradigm of machine visual perception, and its impact is expected to further markedly intensify in the future. A paradigm shift may be necessary for either addressing the immediate pressing application needs or facilitating future advancements.

In this article, we seek to draw inspiration from the human visual system to break through the effectiveness-efficiency trade-off dilemma inherent in the current computer vision paradigm. When interpreting the complex surrounding environments, unlike prevailing neural networks, human vision does not process everything everywhere all at once. Instead, human vision adopts a much smarter active and selective strategy (Fig. \ref{fig:fig1}b), sequentially sampling visual inputs by shifting a small, high-resolution fovea toward a few regions or objects of interest, and constructing a perception of the visual environment by combining information from different fixations over time \cite{biederman1972perceiving, sagi1985and, olveczky2003segregation, najemnik2005optimal, carrasco2011visual, wolfe2017five}. This evolved visual system enables the effective filtration of pertinent signals from extraneous information \cite{ward2002fast, moore2003selective, ma2011behavior, henderson2017meaning}, markedly diminishing the complexity encountered in processing the vast spectrum of visual data presented by the environment \cite{wolfe2004attributes, hanning2023dissociable}. Ultimately, regardless of the complexity of the original visual environment, the resource demands of human visual perception are generally determined by the `bandwidth' and total number of fixations: the former has been pre-defined as a proper size for efficient processing, while the latter can be minimized by only acquiring information when and where it is essential for specific tasks. Thus, the human visual system not only incorporates tremendous numbers of neurons and demonstrates remarkable capabilities, but can also efficiently process the highly complex visual scenes presented by the real world, without being affected by the `impossible triangle' limitation (Fig. \ref{fig:fig1}a) faced by modern computer vision models.

As early as 2015, LeCun, Bengio, and Hinton (in the `The future of deep learning' section in ref. \cite{lecun2015deep}) have famously argued that: in the future, computer vision systems of AI are expected to attain much progress by emulating human vision to sequentially and actively decide where to look in an intelligent, task-specific way. However, nearly a decade later, the significant potential of developing human-like adaptive visual systems has not yet received adequate attention. Early studies \cite{mnih2014recurrent, ba2014multiple} have preliminarily indicated the promise of this direction using small models and tiny experiments, such as classifying handwritten digits, but there remains a huge gap between these initial efforts and modern large-scale neural networks and real-world-level application scenarios. More recently, several works have also sought to introduce adaptiveness into computer vision models \cite{huang2018multiscale, li2019improved, yang2020resolution, wang2020glance, huang2022glance, wang2021not, pan2021ia, rao2021dynamicvit, Evo-vit, fayyaz2022adaptive, yin2022vit, bolya2023token}, but most of them consider only an incomplete modeling of the adaptive capabilities of the human visual system, usually resulting in only modest improvements in computational efficiency. Besides, these approaches tend to focus on technical solutions tailored for specific network architectures or tasks, without offering in-depth, widely applicable theoretical and empirical insights that could guide the design and training of adaptive models in broader fields.
More details of existing works can be found in Supplementary Section \textcolor{blue}{A}. In conclusion, there is an urgent need to establish a unified computational framework that is underpinned by clear motivations, offers flexible generalizability across diverse network architectures and tasks, and is grounded in sound theoretical learning principles, demonstrating how the AI and computer vision communities can leverage adaptive visual perception models to address the effectiveness-efficiency trade-off dilemma behind the `impossible triangle' challenge (Fig. \ref{fig:fig1}a).

In response to these pressing needs, we develop a novel AdaptiveNN framework. AdaptiveNN presents a new computer vision paradigm that inherently incorporates human-like adaptive perception behaviors, and its formulation is general enough to be compatible with various network architectures (\emph{e.g.}, Transformers and convolutional neural networks) and tasks (\emph{e.g.}, as stand-alone perceptual models or as the basis of multimodal large language models, being applied to static images and videos, or interacting with dynamic environments, such as for robotics). 
In specific, given a visual environment (an image or video frame), AdaptiveNN formulates visual perception as a multi-step dynamic decision-making process, sequentially fixating on the regions of interest, incrementally combining information across fixations to build up a continuously updated internal representation, and actively determining when is sufficient to conclude observation. At each step, the model summarizes past information to decide if further observation is warranted; if so, the position of the next visual fixation will be selected, and the region around there will be processed with a high-capacity feature-extraction module to update the internal representation of AdaptiveNN accordingly. 
Similar to the human visual system, AdaptiveNN focuses resources selectively on some important parts of the visual environment captured by several fixations, whose number is dynamically adjusted depending on the difficulty of accomplishing the task at hand on top of each specific sample. The resource demands of its inference process are independent of the size, or complexity, of the visual environment to perceive. Hence, compared with the current prevailing paradigm that processes the full visual environment all at once, AdaptiveNN enables preserving the superior accuracy of large-scale neural networks with high-resolution inputs, but remains low-cost during inference by strategically selecting `where to look', thus minimally suffering from the effectiveness-efficiency trade-off dilemma in prior methods.
We build a theoretical analysis that enables training AdaptiveNN directly in end-to-end to maximize the performance measure of a given task, typically defined as a loss function. This optimization procedure is general as it does not rely on specialized task formats or additional annotations beyond the task objective itself. On the contrary, we prove that when considering optimizing perceptual behavior distributions for an arbitrary vision task, an integration of representation learning and self-rewarding reinforcement learning naturally emerges as a major learning principle.
The former trains the feature-extraction and representation-updating components of AdaptiveNN, while the latter addresses the non-differentiability of learning to select fixation positions and adaptively conclude observation.

Our comprehensive experimental evaluation reveals that AdaptiveNN, across a diverse spectrum of scenarios, demonstrates markedly improved energy efficiency, flexibility, and interpretability. These features align with the recognized advantages of human visual systems \cite{ward2002fast, wolfe2004attributes, najemnik2005optimal, ma2011behavior, mnih2014recurrent, henderson2017meaning, gottlieb2018towards, hanning2023dissociable}. Our key results include:
\begin{itemize}[itemsep=0pt,topsep=0pt,parsep=0pt]
    \item First, on seven popular benchmarks, ranging from general large-scale real-world visual understanding to fine-grained visual recognition, AdaptiveNN with active perception capabilities achieves a computational cost reduction of $4-8\times$ without compromising accuracy, compared to existing `passive' \cite{lecun2015deep} vision models. When considering a more general visual perception scenario in the wild -- processing non-object-centric road-scene images collected on real moving vehicles, AdaptiveNN exhibits a remarkable speedup of $28\times$ in the sign recognition task. 
    \item Second, our model is distinctive in its human-like flexibility. Confronting circumstances where the quantities of available resources vary dynamically, AdaptiveNN can adjust its inference cost online (by simply varying the statistical distributions of fixation numbers) without necessitating additional training, yielding a favorable efficiency-effectiveness trade-off among a wide range. This enables AdaptiveNN to dynamically make full use of all available resources or obtain the required performance with minimal power consumption. Besides, to mimic the cases where the demands of vision tasks are highly diversified, we consider a visual search scenario where the categories and numbers of targets can be flexibly changed. AdaptiveNN maintains an average success of $\sim90\%$ consistently across all different tasks, while existing methods \cite{mnih2014recurrent, ba2014multiple} usually struggle for $\sim20\%$.
    \item Third, visualization results uncover that the model's visual fixation patterns offer a critical window into interpreting the decision-making processes of our model, which is also a major strategy for understanding human vision \cite{ward2002fast, gottlieb2018towards, valliappan2020accelerating}. Moreover, we demonstrate the applicability of AdaptiveNN in applications where interpretability holds vital importance, such as medical diagnosis. In pneumonia detection, for example, the visual fixations of AdaptiveNN learned with only classification labels succeed in localizing lung lesions, and these results are consistent with the judgment of human clinicians.
    \item Last but not least, to thoroughly uncover the potential of our framework, we employ AdaptiveNN to establish an embodied multimodal large language model (MLLM). The MLLM receives text instructions as control signals, adaptively perceives the visual environments, and accordingly interacts with the environment to execute complex robot manipulation tasks. In alignment with previous findings, AdaptiveNN reduces inference cost by $4-6\times$ without sacrificing performance, exhibits marked behavioral flexibility to adapt to diverse task instructions and fluctuating resource availability, and offers enhanced interpretability.
\end{itemize}
In summary, we believe that these superiorities of AdaptiveNN demonstrate a practical new avenue toward the next generation of energy-efficient, flexible, and interpretable computational visual perception paradigms.

Beyond the aforementioned merits, we believe that AdaptiveNN also emerges as a potent computational instrument for probing into human behavioral and learning processes. For example, we evaluate humans and AdaptiveNN side by side on the same tests of visual perception behaviors, where our model is learned exclusively on large-scale, object-centric visual recognition tasks. Our quantitative analysis reveals that, in many cases, AdaptiveNN performs mostly consistent with human vision, in terms of both the locations of visual regions it fixates on and the difficulty level it assesses to accomplish the given task based on each individual visual environment. AdaptiveNN produces nuanced human-like patterns such as being attracted by faces, hands, human bodies, or human actions. Hence, highly human-like behaviors of actively observing objects and scenes are learnable through being trained to efficiently fulfill routine vision tasks like recognition, without the guidance of other innate inductive biases (\emph{e.g.}, biases concerning objects, agents, space, and biological motion \cite{kellman1983perception, spelke1992origins, spelke1994initial, viola2004can, simion2011processing, ullman2012simple, stahl2015observing, reynolds2018development}). These insights suggest AdaptiveNN's potential as a valuable tool for advancing the understanding of some fundamental questions in human visual cognition.

\clearpage

\begin{figure*}[!t]
    \centering
    \vskip -0.4in
    \includegraphics[width=\textwidth]{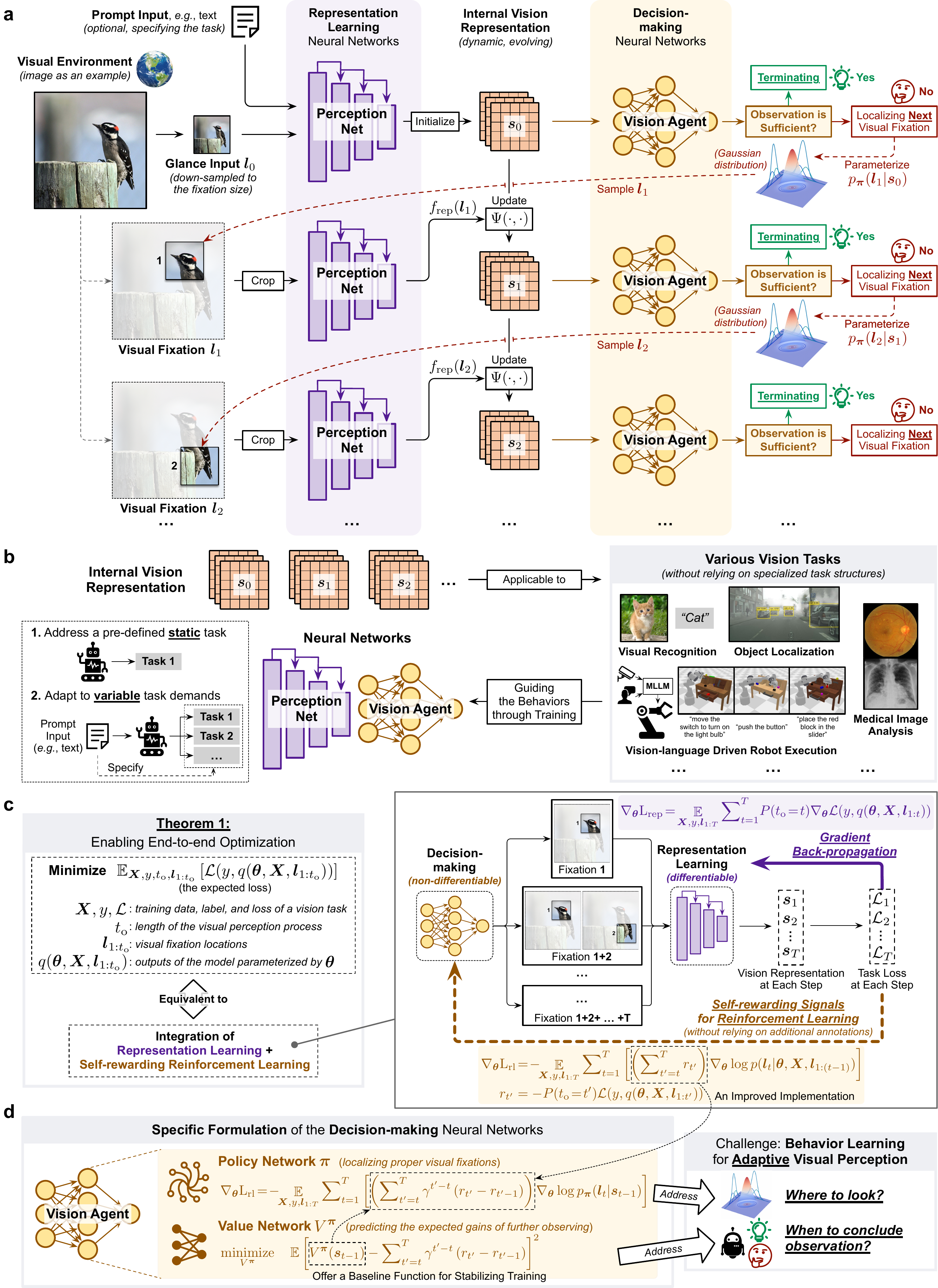}
    \vskip -0.3in
\end{figure*}

\begin{figure*}[!t]
    \caption{
        \textbf{Schematic overview of AdaptiveNN.}
        \label{fig:fig1_2}
    \textbf{(a)} The architecture and inference procedure of AdaptiveNN. The model iteratively identifies new valuable regions to fixate on, and actively determines the appropriate time to conclude its observation. The information from all processed fixations is incrementally combined, forming a dynamic, evolving internal vision representation. The full sequential perception procedure initiates with a quick glance at the visual environment, emulating the coarse-to-fine paradigm of the conscious perception of human vision \cite{navon1977forest, chen1982topological, hochstein2002view, ganel2003visual, oliva2006building, peelen2024predictive}.
    \textbf{(b)} AdaptiveNN is compatible with a broad range of vision tasks, including both pre-defined static tasks and tasks with variable demands specified by prompt inputs (\emph{e.g.}, text). The behaviors of AdaptiveNN are learned under task-driven supervision signals. 
    \textbf{(c)} The training of AdaptiveNN is challenging as it incorporates both continuous and discrete optimization. We address this by developing a novel theoretical analysis that decomposes the expected loss into an integration of representation learning and self-rewarding reinforcement learning objectives. Our method enables training AdaptiveNN in end-to-end without relying on specialized task formats or additional annotations beyond standard objectives.
    \textbf{(d)} In implementation, we formulate the vision agent as the combination of a policy network $\bm{\pi}$ and a value network $V^{\bm{\pi}}$, which addresses `where to look' and `when to conclude observation' simultaneously, and also facilitates a stabilized reinforcement learning process.
    }
\end{figure*}

\section{AdaptiveNN}

In this section, we first briefly describe the inference procedure and major components of our proposed AdaptiveNN framework (Section \ref{sec:arch_AdaNN}). Then we introduce its theoretical learning principles (Section \ref{sec:formulation_AdaNN}). More details can be found in Section \ref{sec:method}.

\subsection{Framework}
\label{sec:arch_AdaNN}

\noindent \textbf{Inference of AdaptiveNN}.
We start by describing AdaptiveNN's overall inference procedure. AdaptiveNN aims to drive a paradigm shift from `passive' to `active and adaptive' computer vision models. The key insight behind AdaptiveNN is to mimic the human visual system, modeling visual perception as a coarse-to-fine sequential decision-making process, rather than only receiving inputs `passively' and processing all input regions in parallel.

Specifically, as shown in Fig. \ref{fig:fig1_2}a, consider a generic visual environment structured as an $H\!\times\!W$ scene. 
AdaptiveNN constructs a perception of it by recurrently attending to several selected locations within it, and incrementally combining information from these fixations over time to build up a dynamic, evolving internal vision representation of the scene. This procedure is formulated as a sequential decision-making process. 
At each step, a \emph{Vision Agent} processes the current composite vision representation, and determines whether the observation on the environment is sufficient enough to be terminated based on the information of previous steps and the task demands.
If more information needs to be acquired from the environment, \emph{Vision Agent} will select the next location to fixate on; conversely, the perception process on the environment will not proceed, with the current vision representation leveraged to address the task of interest. Each selected visual fixation will be processed by a high-capacity representation learning neural network (\emph{Perception Net}) for extracting discriminative local features to update the internal vision representation. Notably, without loss of generality, a visual fixation is defined as a $P\!\times\!P$ patch ($P\!<\!H,W$) to be compatible with most modern deep learning scenarios \cite{lecun2015deep, he2016deep, huang2017densely, dosovitskiy2021an}. Furthermore, the full sequential process initiates with a quick glance, where a network coarsely processes an unknown scene in a down-sampled scale to establish an initial representation. 
This design is introduced inspired by the prominent theory that human vision operates in a global-to-local, coarse-to-fine manner \cite{navon1977forest, chen1982topological, hochstein2002view, ganel2003visual, oliva2006building, peelen2024predictive}, where humans' initial conscious perception (vision at a glance \cite{hochstein2002view}) matches a high-level, generalized, abstract scene interpretation, while later vision guides serial eye movements to attend to low-level, specific, fine receptive fields, incorporating the detailed information available there into conscious perception.

In short, mimicking human vision, AdaptiveNN observes a complex visual environment through iteratively localizing and processing visual fixations, and actively deciding when its knowledge about the scene is adequate for fulfilling the given task. 

\vskip 0.1in

\noindent \textbf{Components of AdaptiveNN.}
Here we briefly describe the major components of AdaptiveNN. Their detailed architectures are deferred to Section \ref{sec:imple_detail_arch}.

\textbf{Visual fixations} $\bm{l}_{1},\ldots,\bm{l}_{t}$ (at $1^{\textnormal{st}},\ldots,t^{\textnormal{th}}$ steps).
AdaptiveNN never senses the visual environment in its entirety. In contrast, it extracts information from a sequence of smaller, bandwidth-limited inputs corresponding to certain local regions of the environment, named visual fixations, denoted by $\bm{l}_{1},\ldots,\bm{l}_{t}$. AdaptiveNN actively determines the locations of $\bm{l}_{1},\ldots,\bm{l}_{t}$ step by step, under the goal of maximizing their contributions to the task of interest, until sufficient information has been acquired. The small bandwidth of visual fixations ensures that the resource demands of AdaptiveNN can be controlled independently of the size, or complexity of the original visual environments, and will not grow dramatically with higher spatial-temporal input resolution. Consequently, visual perception can be efficient even when employing large-scale neural networks to perceive intricate real-world scenes with high frame rates. Furthermore, since the fixations are strategically localized to focus on the important visual content and new fixations will be continuously introduced until the observation is sufficient, the model performance can be maximally preserved. In some scenarios, the performance may even be improved by eliminating task-irrelevant information interference.
Additionally, although we consider the most general form of square patches as fixations to ensure the generality of our framework, more advanced fixation formats may be adopted for optimization toward specific models of tasks (\emph{e.g.}, multi-scale mixed visual fixations).

\textbf{Perception Net} $f_{\textnormal{rep}}$
is a representation learning backbone network that converts raw pixelated image inputs into deep representations with semantic meanings. As aforementioned, high-capacity, large-scale models can be employed as $f_{\textnormal{rep}}$, to obtain strong visual processing capabilities. Since $f_{\textnormal{rep}}$ only needs to process the bandwidth-limited visual fixation, its inference still enjoys superior efficiency.

\textbf{Internal vision representation} $\bm{s}_1, \ldots, \bm{s}_t$
is maintained during the whole visual perception process, and dynamically updated utilizing the features extracted from each visual fixation by $f_{\textnormal{rep}}$, namely
\vskip -0.07in

\begin{equation}
    \bm{s}_t = \Psi(\bm{s}_{t-1}, f_{\textnormal{rep}}(\bm{l}_{t})),
\end{equation}

\vskip 0.03in \noindent
where $\Psi(\cdot, \cdot)$ denotes the updating operator (see Section \ref{sec:imple_detail_arch} for its implementation details).
The internal representation $\bm{s}_t$ summarizes the information from the history of all past observations, encoding the model's current knowledge of the environment. It serves two critical purposes. First, as shown in Fig. \ref{fig:fig1_2}b, $\bm{s}_t$ is the output of the AdaptiveNN framework, and the information within it will be utilized to fulfill the given vision task (feeding $\bm{s}_t$ into a task-specific head, detailed in Section \ref{sec:imple_detail_arch}). Second, $\bm{s}_t$ provides necessary information for decision-making in the sequential adaptive visual perception process, \emph{i.e.}, deciding whether to conclude observation now, and where to look next.
Both these two abilities are acquired through being trained to accomplish the vision task of interest (Fig. \ref{fig:fig1_2}b).

\textbf{Vision agent}
is a decision-making neural network that receives the internal vision representation $\bm{s}_1, \ldots, \bm{s}_t$ as inputs. At each step of the sequential perception process, it makes two decisions: assessing whether to terminate the ongoing observation and, if necessary, determining the subsequent visual fixation location. To achieve both of them simultaneously, we formulate the vision agent as the combination of a policy network $\bm{\pi}$ and a value network $V^{\bm{\pi}}$ (Fig. \ref{fig:fig1_2}d). 
This formulation is naturally derived from the theoretical learning principles of AdaptiveNN, which will be discussed in Section \ref{sec:formulation_AdaNN} coupled with the training algorithm of $\bm{\pi}$ and $V^{\bm{\pi}}$. Here we first introduce the inference process of $\bm{\pi}$ and $V^{\bm{\pi}}$.
At $t^{\textnormal{th}}$ step of inference, the outputs of $\bm{\pi}$ parameterize a distribution from which we can sample the location of $\bm{l}_{t+1}$, namely 
\vskip -0.07in

\begin{equation}
    \label{eq:sampling_fix}
    \bm{l}_{t+1} \sim p_{\bm{\pi}}(\bm{l}_{t+1}\lvert\bm{s}_{t}).
\end{equation}

\vskip 0.03in \noindent
Paired with $\bm{\pi}$, the value network $V^{\bm{\pi}}$ employs $\bm{s}_{t}$ to predict the expected gains of performing further observation on top of $\bm{s}_{t}$ (\emph{i.e.}, further updating $\bm{s}_{t}$) using $\bm{\pi}$, yielding a state value $V^{\bm{\pi}}(\bm{s}_t)$. We compare $V^{\bm{\pi}}(\bm{s}_t)$ with a threshold $\eta_t$. If $V^{\bm{\pi}}(\bm{s}_t)\leq\eta_t$, we are indicated that further observing is not valuable enough, and the sequential perception process will be concluded. Otherwise, $V^{\bm{\pi}}(\bm{s}_t)>\eta_t$ reveals that more fixations may yield significant improvements, and thus the new fixation $\bm{l}_{t+1}$ will be processed, evoking the ${(t+1)}^{\textnormal{th}}$ step. The value of $\eta_t$ is solved on the validation data, and can be adjusted online to vary the average resource demands of AdaptiveNN without additional training (see Section \ref{sec:imple_detail_inference}). Notably, the outputs of $\bm{\pi}$ and $V^{\bm{\pi}}$ consider both the current specific situations as the observation on each particular visual environment progresses, as well as the demands of the given vision task. The former has been encoded into $\bm{s}_{t}$, while the latter is attained through the training process (see Section \ref{sec:formulation_AdaNN}), where $\bm{\pi}$ and $V^{\bm{\pi}}$ can either learn to address a pre-defined static task, or learn to adapt to variable task demands on top of a prompt input (\emph{e.g.}, text), as depicted in Fig. \ref{fig:fig1_2}b. Moreover, it is noteworthy that $V^{\bm{\pi}}(\bm{s}_t)$ reflects the model's subjective assessments, namely whether the perception process of AdaptiveNN itself is worth proceeding, while $\eta_t$ determining if $V^{\bm{\pi}}(\bm{s}_t)$ is sufficiently small represents the objective constraints imposed by the external environment, \emph{e.g.}, the extent to which the overall available resources for visual perception are adequate in the current circumstance. This decoupled modeling of subjective and objective factors enables more flexible usage of our framework.
\vskip 0.1in

\noindent \textbf{Compatibility with various network architectures and vision tasks.}
Importantly, the major goal of developing AdaptiveNN is to facilitate a paradigm shift toward active and adaptive visual perception models. Therefore, its formulation has been designed to be general and flexible. For example, various off-the-shelf network architectures, such as Transformers and convolutional neural networks, can be conveniently deployed as the feature-extraction module of AdaptiveNN. Moreover, as shown in Fig. \ref{fig:fig1_2}b, the internal vision representation of AdaptiveNN does not adopt a strong assumption on its application scenario, and may be implemented under diverse task settings, \emph{e.g.}, employing AdaptiveNN as stand-alone perceptual models or as the basis of multimodal large language models, being applied to static images and videos, or interacting with dynamic environments, such as for robotics. In Section \ref{sec:results}, we provide comprehensive evaluation results to support these claims.

\subsection{Theoretical learning principles}
\label{sec:formulation_AdaNN}

Training AdaptiveNN incorporates both continuous (\emph{e.g.}, extracting feature from visual fixations) and discrete (\emph{e.g.}, learning to select fixation positions and adaptively conclude observation) optimization. This can not be straightforwardly solved by standard algorithms like gradient back-propagation. To address optimization challenges, we present a theorem that enables training AdaptiveNN in end-to-end (Fig. \ref{fig:fig1_2}c).
\vskip 0.1in

\noindent \textbf{Formulation.}
Given an AdaptiveNN model parameterized by $\bm{\theta}$ and a visual environment $\bm{X}$ to perceive, we refer to the distribution of the locations of visual fixation $\bm{l}_{1},\ldots,\bm{l}_{t}$ as $p(\bm{l}_{1:t}\lvert\bm{\theta}, \bm{X})$. On top of this, given a vision task, the model's outputs at $t^{\textnormal{th}}$ step for accomplishing the task (stemming from the internal vision representation $\bm{s}_t$) are denoted as $q(\bm{\theta}, \bm{X}, \bm{l}_{1:t})$, \emph{e.g.}, output logits for classification. Then, for a label $y$ associated with $\bm{X}$, which is defined upon the task, assume that we have a performance measure (typically a loss function) $\mathcal{L}(y, q(\bm{\theta}, \bm{X}, \bm{l}_{1:t}))$, such as the cross-entropy loss for classification and the mean squared error for regression. 
\vskip 0.1in

\noindent \textbf{Optimization objective}.
During training, AdaptiveNN focuses on learning a model capable of sequentially attending to proper visual fixations within a complex visual environment, and extracting information from these fixations to accomplish the vision task of interest. Its optimization objective is defined as minimizing the expected performance measure of the task, namely
\vskip -0.07in

\begin{equation}
    \label{eq:basic_train_obj}
    \textnormal{minimize} \quad
    \mathrm{L}(\bm{\theta}) = \mathbb{E}_{\bm{X}, y, t_{\textnormal{o}} \sim p(t_{\textnormal{o}})} 
        \int_{\bm{l}_{1:t_{\textnormal{o}}}} 
        p(\bm{l}_{1:t_{\textnormal{o}}}\lvert\bm{\theta}, \bm{X}) 
        \mathcal{L}(y, q(\bm{\theta}, \bm{X}, \bm{l}_{1:t_{\textnormal{o}}})).
\end{equation}

\vskip 0.03in \noindent
Here $t_{\textnormal{o}}\!\sim\!p(t_{\textnormal{o}}),\ t_{\textnormal{o}}\!\in\!\{1,\ldots,T\} $ indicates that during training, the total length $t_{\textnormal{o}}$ of the sequential perception process is sampled from a fixed prior distribution $p(t_{\textnormal{o}})$, which reflects the training process's statistical-level preference on the perception procedure's length. This consideration is introduced to add analytical flexibility to our model. Besides, note that we do not explicitly formulate the actions of actively concluding observation in Eq. (\ref{eq:basic_train_obj}). Conversely, we will demonstrate that the ability to evaluate when the observation is sufficient can be conveniently acquired on top of the model learned by minimizing Eq. (\ref{eq:basic_train_obj}).


\begin{theorem}
    \label{theorem:1}
    (see Section \ref{sec:theory} for proof)
    The gradients of $\mathrm{L}(\bm{\theta})$ can be decomposed into a combination of representation learning and self-rewarding reinforcement learning objectives:
    \begin{equation}
        \nabla_{\bm{\theta}}\mathrm{L}(\bm{\theta}) = 
            \nabla_{\bm{\theta}}\mathrm{L}_{\textnormal{rep}}(\bm{\theta}) + 
            \nabla_{\bm{\theta}}\mathrm{L}_{\textnormal{rl}}(\bm{\theta}),
    \end{equation}
    where
    \begin{equation}
        \label{eq:theorem_1}
        \begin{split}
            \nabla_{\bm{\theta}}\mathrm{L}_{\textnormal{rep}} 
            &= 
            \underbrace{
            \mathbb{E}_{\bm{X}, y, \bm{l}_{1:T}} 
            \sum\nolimits_{t=1}^{T}
            P(t_{\textnormal{o}}=t)
            \nabla_{\bm{\theta}}  \mathcal{L}(y, q(\bm{\theta}, \bm{X}, \bm{l}_{1:t}))
            }_{\textnormal{representation learning}},
                \\
            \nabla_{\bm{\theta}}\mathrm{L}_{\textnormal{rl}}
            &=
            \underbrace{
                -\mathbb{E}_{\bm{X}, y, \bm{l}_{1:T}} 
                \sum\nolimits_{t=1}^{T}
                \left[
                    \left(
                        \sum\nolimits_{t'=t}^{T} r_{t'}
                    \right)
                    \nabla_{\bm{\theta}}  \log p(\bm{l}_{t} \lvert \bm{\theta}, \bm{X}, \bm{l}_{1:(t-1)}) 
                \right]
                }_{\textnormal{self-rewarding reinforcement learning}},
                \\
            & \ \ \ \ \ \ \ \ r_{t'} = -P(t_{\textnormal{o}}=t') \mathcal{L}(y, q(\bm{\theta}, \bm{X}, \bm{l}_{1:t'})).
        \end{split}
    \end{equation}
\end{theorem}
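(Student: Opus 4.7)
The plan is to differentiate Eq.~(\ref{eq:basic_train_obj}) in $\bm{\theta}$ directly, and then use two standard tools: the product rule together with the log-derivative identity $\nabla_{\bm{\theta}} p = p\,\nabla_{\bm{\theta}} \log p$, and an index swap in a double summation to expose a reward-to-go structure. First I would rewrite the outer expectation over $t_{\textnormal{o}}$ as an explicit sum, abbreviating $P_t := P(t_{\textnormal{o}}=t)$, giving $\mathrm{L}(\bm{\theta}) = \sum_{t=1}^{T} P_t\, \mathbb{E}_{\bm{X},y} \int p(\bm{l}_{1:t}\lvert\bm{\theta},\bm{X})\, \mathcal{L}(y, q(\bm{\theta},\bm{X},\bm{l}_{1:t}))\, d\bm{l}_{1:t}$. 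Under mild regularity conditions allowing $\nabla_{\bm{\theta}}$ to pass through the integral and expectation, the product rule splits each summand into one piece in which the gradient hits the loss (holding $p$ fixed) and one in which it hits $p$ (holding the loss fixed).

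For the loss-gradient piece, I note that $\mathcal{L}(y, q(\bm{\theta},\bm{X},\bm{l}_{1:t}))$ depends only on the first $t$ fixations, so marginalizing back in $\bm{l}_{(t+1):T}$ under $p(\cdot\lvert\bm{\theta},\bm{X})$ leaves the value unchanged. This lets me absorb the $t$-sum into a single expectation over the full trajectory $\bm{l}_{1:T}$, yielding the claimed $\nabla_{\bm{\theta}} \mathrm{L}_{\textnormal{rep}}$.

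For the $p$-gradient piece, applying $\nabla_{\bm{\theta}} p = p\,\nabla_{\bm{\theta}} \log p$ converts it into $\sum_{t=1}^{T} P_t\, \mathbb{E}_{\bm{X},y,\bm{l}_{1:t}}[\mathcal{L}\,\nabla_{\bm{\theta}} \log p(\bm{l}_{1:t}\lvert\bm{\theta},\bm{X})]$. I then factorize the joint density autoregressively, $\log p(\bm{l}_{1:t}\lvert\bm{\theta},\bm{X}) = \sum_{s=1}^{t} \log p(\bm{l}_s \lvert \bm{\theta},\bm{X},\bm{l}_{1:(s-1)})$, extend each expectation to $\bm{l}_{1:T}$ as before, and swap the summation order from $\sum_{t=1}^{T}\sum_{s=1}^{t}$ to $\sum_{s=1}^{T}\sum_{t=s}^{T}$. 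The inner sum equals $-\sum_{t'=s}^{T} r_{t'}$ with $r_{t'} := -P_{t'}\,\mathcal{L}(y, q(\bm{\theta},\bm{X},\bm{l}_{1:t'}))$; renaming $s \to t$ then produces exactly the stated $\nabla_{\bm{\theta}} \mathrm{L}_{\textnormal{rl}}$, with the score attached to the conditional $p(\bm{l}_t\lvert\bm{\theta},\bm{X},\bm{l}_{1:(t-1)})$ rather than the joint.

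The main obstacle is conceptual rather than analytical: recognizing that the autoregressive expansion of $\nabla_{\bm{\theta}} \log p(\bm{l}_{1:t}\lvert\bm{\theta},\bm{X})$, once combined with the $(t,s)$ transposition, produces precisely the REINFORCE-style reward-to-go $\sum_{t'=t}^{T} r_{t'}$ with a self-generated reward determined by the per-step task loss weighted by the prior $P_{t'}$. The only technical subtlety is justifying the interchange of $\nabla_{\bm{\theta}}$ with the integral over $\bm{l}_{1:t}$, which follows under standard dominated-convergence hypotheses on $\mathcal{L}$, $q$, and the score $\nabla_{\bm{\theta}} \log p$.
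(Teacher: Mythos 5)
Your proposal is correct and follows essentially the same route as the paper's proof: differentiate under the integral with the product rule, apply the log-derivative identity, marginalize the loss-gradient term over the unused future fixations to absorb the $t_{\textnormal{o}}$-expectation, autoregressively factorize $\log p(\bm{l}_{1:t_{\textnormal{o}}}\lvert\bm{\theta},\bm{X})$, and transpose the double sum to expose the reward-to-go $\sum_{t'=t}^{T} r_{t'}$. The only difference is cosmetic ordering (you expand the $t_{\textnormal{o}}$-expectation as an explicit sum up front, whereas the paper invokes the independence of $t_{\textnormal{o}}$ and $\bm{l}_{1:t_{\textnormal{o}}}$ after extending to the full trajectory), and your signs and final identification of $r_{t'}$ match the stated result.
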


In Eq. (\ref{eq:theorem_1}), $\nabla_{\bm{\theta}}{\mathrm{L}_{\textnormal{rep}}}$ is a standard form of representation learning, namely minimizing the task loss over the features extracted from $\bm{l}_{1},\ldots,\bm{l}_{t}$ by the model. Additionally, $\nabla_{\bm{\theta}}{\mathrm{L}_{\textnormal{rl}}}$ boiling down to a form of policy gradients in reinforcement learning \cite{Schulmanetal_ICLR2016}, where $p(\bm{l}_{t} \lvert \bm{\theta}, \bm{X}, \bm{l}_{1:(t-1)})$ is the action distribution, $r_{t'}$ is the reward received at each time step, and $\sum\nolimits_{t'=t}^{T} r_{t'}$ is the cumulative reward following the execution of an action $\bm{l}_{t}$. Since $r_{t'}$ is defined using the negative values of task loss of the model itself, we name $\mathrm{L}_{\textnormal{rl}}$ as the self-rewarding reinforcement learning objective. 

In conclusion, Theorem \ref{theorem:1} reveals that when considering minimizing the expected loss of AdaptiveNN over a vision task, an integration of representation learning and self-rewarding reinforcement learning objectives naturally emerges. The former trains the model to extract deep representations from input visual fixations, while the latter guides the model to strategically select fixation locations within the complex visual environment to minimize the loss. Notably, both of them only leverage the standard task loss, without relying on specialized task formats or additional annotations.
\vskip 0.1in

\noindent \textbf{Specific learning algorithm}.
Given Theorem \ref{theorem:1}, $\nabla_{\bm{\theta}}{\mathrm{L}_{\textnormal{rep}}}$ can be directly utilized as the gradient signals for learning feature-extraction modules. For the policy gradients $\nabla_{\bm{\theta}}{\mathrm{L}_{\textnormal{rl}}}$, as reinforcement learning problems are usually more challenging to solve, we propose an augmented version of its basic formulation. First, we introduce a pre-defined discount factor $\gamma \in [0,1]$ \cite{mnih2015human, Schulmanetal_ICLR2016, schulman2017proximal} and a differential form of rewards, aiming to achieve a flexible modeling of balancing long-term and short-term returns, as well as to stabilize the training process. 
Thus, on top of Eq. (\ref{eq:sampling_fix}) and Eq. (\ref{eq:theorem_1}), the policy gradient rule for updating the model can be expressed as
\vskip -0.07in

\begin{equation}
    \label{eq:rl_target_1}
    \begin{split}
        \nabla_{\bm{\theta}}\mathrm{L}_{\textnormal{rl}}
        &=
            -\mathbb{E}_{\bm{X}, y, \bm{l}_{1:T}} 
            \sum\nolimits_{t=1}^{T}
            \left[
                \left(
                    \sum\nolimits_{t'=t}^{T}  \gamma^{t'-t} \left(r_{t'} - r_{t'-1}\right)
                \right)
                \nabla_{\bm{\theta}}  \log p_{\bm{\pi}}(\bm{l}_{t}\lvert\bm{s}_{t-1})
            \right],
            \\
        & \ \ \ \ \ \ \ \ \ \ \ \ \ \ \ \ \ \ r_{t'} = -P(t_{\textnormal{o}}=t') \mathcal{L}(y, q(\bm{\theta}, \bm{X}, \bm{l}_{1:t'})),
    \end{split}
\end{equation}

\vskip 0.03in \noindent
where we have (see Section \ref{sec:theory} for the proof)
\vskip -0.07in

\begin{equation}
    \label{eq:rl_target_limit}
    \begin{split}
        \lim_{\gamma \to 0} \nabla_{\bm{\theta}}\mathrm{L}_{\textnormal{rl}}
        &=
            -\mathbb{E}_{\bm{X}, y, \bm{l}_{1:T}} 
            \sum\nolimits_{t=1}^{T}
            r_{t} \nabla_{\bm{\theta}}  \log p_{\bm{\pi}}(\bm{l}_{t}\lvert\bm{s}_{t-1}),
        \\
        \lim_{\gamma \to 1} \nabla_{\bm{\theta}}\mathrm{L}_{\textnormal{rl}}
        &=
            -\mathbb{E}_{\bm{X}, y, \bm{l}_{1:T}} 
            \sum\nolimits_{t=1}^{T}
            r_{T} \nabla_{\bm{\theta}}  \log p_{\bm{\pi}}(\bm{l}_{t}\lvert\bm{s}_{t-1}).
        \\
    \end{split}
\end{equation}

\vskip 0.03in \noindent
When $\gamma \to 0$, the strategy for selecting the next visual fixation tends to be fully short-sighted and is only optimized to maximize the immediate reward $r_{t}$. Conversely, $0<\gamma<1$ tends to encourage perception strategies that maximally attain the goal within a limited number of fixations. When $\gamma=1$, AdaptiveNN only focuses on maximizing the final reward $r_{T}$, corresponding to the scenarios where abundant resources or energy are available, while the perception process can leverage as many visual fixations as possible to accomplish the task. 

Moreover, we introduce a value network $V^{\bm{\pi}}$ to offer a baseline function for reinforcement learning \cite{sutton1999policy, schulman2017proximal}, which can effectively stabilize training by reducing gradient estimation variance \cite{mnih2015human, silver2016mastering}. The learning objective of $V^{\bm{\pi}}$ is to predict the expected gains of further observing at each step:
\vskip -0.07in

\begin{equation}
    \label{eq:rl_target_2}
    \mathop{\textnormal{minimize}}_{V^{\bm{\pi}}} \quad \mathbb{E}\left[
        V^{\bm{\pi}}(\bm{s}_{t-1}) - 
        \sum\nolimits_{t'=t}^{T}  \gamma^{t'-t} \left(r_{t'} - r_{t'-1}\right)
        \right]^2.
\end{equation}

\vskip 0.03in \noindent
Besides, with this goal, $V^{\bm{\pi}}(\bm{s}_{t-1})$ provides a reasonable proxy measure for adaptive termination, as stated in Section \ref{sec:arch_AdaNN}. For example, a relatively small $V^{\bm{\pi}}(\bm{s}_{t-1})$ indicates that even if the model processes more visual fixations, the loss $\mathcal{L}$ measuring the performance of the given task will not show notable further reduction. Hence, it is reasonable to consider concluding observation at that time.

\clearpage
\begin{figure*}[!t]
    \centering
    \includegraphics[width=\textwidth]{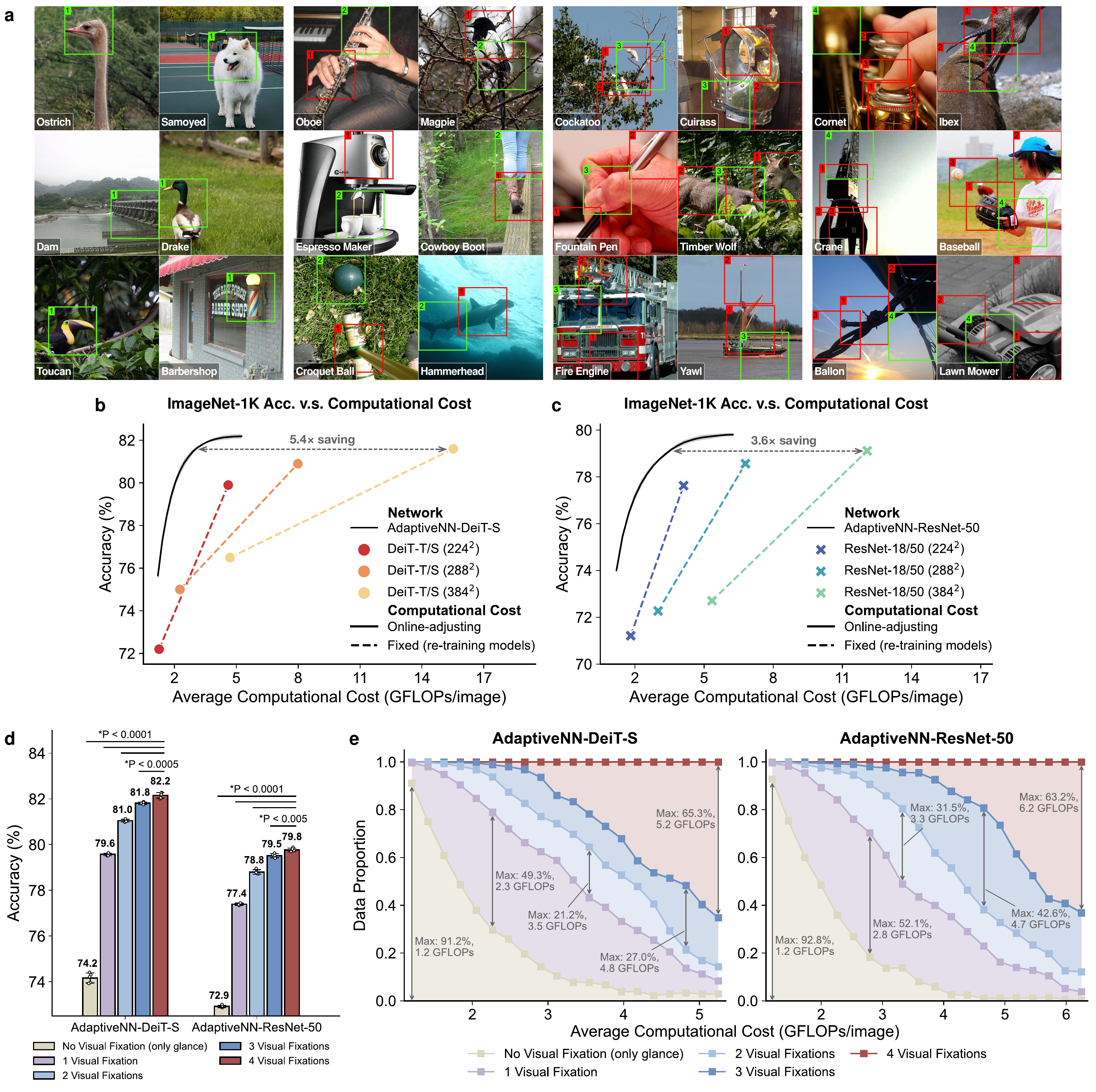}
    \caption{
        \textbf{Results of ImageNet large-scale real-world visual understanding.}
        \textbf{(a)} Qualitative assessment showcasing the visual fixations localized by AdaptiveNN(-DeiT-S), with boxes marking the locations of fixations and colors indicating the model's decision to conclude (green) or continue (red) observation at each step. Step indices are presented at the top left of the boxes. Ground truth labels are displayed at the bottom left of the images.
        \textbf{(b-c)} Quantitative comparisons of AdaptiveNN and traditional non-adaptive models on top of identical backbones: Top-1 validation accuracy versus average computational cost for inferring the model. To obtain non-adaptive models with varying costs, we consider two common approaches: adjusting model sizes and input resolutions.
        \textbf{(d)} Relationship between validation accuracy and the number of visual fixations, assuming that all samples utilize the same number of visual fixations. 
        \textbf{(e)} Proportions of data that utilize different numbers of visual fixations, set against different budget constraints for computational costs.
        *One-way analysis of variance (ANOVA) with Tukey's honestly significant difference (HSD) test. Error bars show the standard deviations of five independent trials with different random seeds.
        \label{fig:fig2}
        }
    \vskip 0.2in
\end{figure*}

\section{Results}
\label{sec:results}

We comprehensively evaluate AdaptiveNN on 17 benchmarks organized into 9 different tasks. These benchmarks include large-scale visual understanding, fine-grained visual recognition, visual search, processing images from real driving and medical scenarios, multimodal large language models for language-driven embodied robot execution, and side-by-side comparisons of humans' visual perception behaviors with our model's capabilities. 
See Section \ref{sec:intro_tasks} for more details of these tasks (including data collection, annotations, metrics, and other setups). 
Employing a wide spectrum of evaluation tasks enables us to arrive at a more complete picture of the characteristics, efficacy, and potential values of AdaptiveNN.

\subsection{Large-scale real-world visual understanding}

Our first evaluation of AdaptiveNN considers the foundational visual understanding task, namely mapping pixelated visual signals (\emph{e.g.}, objects and scenes) to abstract concepts.
We employ the ImageNet image recognition benchmark \cite{russakovsky2015imagenet}, which is widely acknowledged for its critical role in assessing the efficacy of machine learning methods \cite{he2016deep, huang2017densely, dosovitskiy2021an}. Comprising over 1.28 million images classified into 1,000 categories according to the WordNet hierarchy \cite{miller1995wordnet}, ImageNet encompasses a diverse array of visual content, including various objects, buildings, humans, animals, scenes, etc., offering a robust platform for evaluating visual understanding capabilities. Our model is trained to correctly classify the input image. Notably, the feature-extraction networks within AdaptiveNN are compatible with most existing backbones (see Section \ref{sec:imple_detail_arch} for details). To demonstrate the generalizability of AdaptiveNN, we deploy two examples, ResNet (convolutional network) \cite{he2016deep} and DeiT (vision Transformer) \cite{touvron2021training}, each representing a wide range of popular architectures. 

Fig. \ref{fig:fig2}a illustrates the learned visual perception behaviors of AdaptiveNN when applied to the ImageNet visual recognition task. Observations reveal that both the locations of visual fixations and the length of observation processes for different samples are reasonable and interpretable. Our model acquires the capability of fixating on the class-discriminative regions, such as the heads of animals, the principal structures of musical instruments, and the functional parts like knobs and nozzles on coffee machines. Moreover, in scenarios involving complex or atypical visual inputs, AdaptiveNN adjusts by extending the duration of observation to enhance the accuracy of its predictions. This adaptive behavior is particularly evident when the objects of interest are small, located at a significant distance from the camera, or depicted from uncommon perspectives, showcasing only parts of their entirety.

Quantitatively, introducing human-like adaptive visual perception to computer vision models substantially enhances both their energy efficiency and adaptability. This can be illustrated by Fig. \ref{fig:fig2}b-\ref{fig:fig2}c, and Supplementary Data Tab. \textcolor{blue}{2-5}, where the performance of our model, equipped with AdaptiveNN, is compared against the traditional, non-adaptive counterparts on top of the same backbones. The sole distinction lies in the implementation of AdaptiveNN. DeiT-S and ResNet-50 maximally achieve validation accuracies of 81.6\% and 79.1\% at the computational costs of 15.5 and 12.1 GFLOPs per image. In contrast, AdaptiveNN-DeiT-S and AdaptiveNN-ResNet-50 performs on par with them at 2.86 and 3.37 GFLOPs per image, which are 5.4$\times$ and 3.6$\times$ more efficient, respectively. Moreover, the computational cost of AdaptiveNN can be flexibly adjusted online, resulting in a favorable balance between efficiency and effectiveness across broad ranges. This adaptability is in contrast to non-adaptive models, which typically require retraining to achieve similar performance adjustments.

Fig. \ref{fig:fig2}d and Supplementary Data Tab. \textcolor{blue}{6-7} report the validation accuracies with all samples processed using the same number of visual fixations. Progressively leveraging more fixations improves accuracy significantly (all P$<$0.005), yet the effects gradually diminish, indicating increased difficulty in further boosting accuracy upon a decent performance. Fig. \ref{fig:fig2}e and Supplementary Data Tab. \textcolor{blue}{8-9} illustrate how AdaptiveNN adapts to the dynamically varied quantities of available resources. In scenarios where computational resources are abundant, the model can afford to allocate numerous visual fixations to most samples, thereby optimizing the overall accuracy. In contrast, when computational resources are constrained, prioritization is given to the more challenging samples, while other samples are allocated fewer resources to compensate for the limited budget. This strategic allocation underscores AdaptiveNN's adaptability in managing resource distribution to maximize performance efficiency.

\subsection{Fine-grained visual recognition}

Beyond the general-purpose large-scale visual understanding task, we further probe into AdaptiveNN's nuanced visual discriminative capabilities using six fine-grained recognition tasks. These tasks are characterized by the small differences between classes and significant variations within each class, such as differentiating between visually very close species of birds or pets against highly diversified backgrounds. Accomplishing them necessitates AdaptiveNN to localize and identify minor, task-dependent signals out of an extensive or even overwhelming multitude of irrelevant visual information. This ability to filter and focus on pertinent details mirrors a key strength of human visual systems \cite{wolfe2004attributes, najemnik2005optimal, ma2011behavior, mnih2014recurrent, henderson2017meaning, gottlieb2018towards, hanning2023dissociable}, revealing the model's potential to approach the nuanced perceptual capabilities observed in human cognition.

Extended Data Fig. \ref{fig:fig3}a and Supplementary Data Tab. \textcolor{blue}{10-15} summarize the quantitative evaluation results. Similar to Fig. \ref{fig:fig2}b and \ref{fig:fig2}c, our model's performance in terms of energy efficiency and adaptability is benchmarked against that of conventional, non-adaptive models. Both AdaptiveNN and the baselines are fine-tuned from the checkpoints pre-trained on ImageNet. AdaptiveNN dramatically saves the computational cost of the model without sacrificing accuracy (multiple of reduction: 6.2$\times$, 6.1$\times$, 7.6$\times$, 8.2$\times$, 5.8$\times$, 6.3$\times$). This efficiency gain surpasses those observed on ImageNet, underscoring our model's human-like proficiency in fixating on and leveraging nuanced discriminative features.

Furthermore, the behaviors of our model demonstrate good interpretability. As depicted in Extended Data Fig. \ref{fig:fig3}b-\ref{fig:fig3}e, AdaptiveNN autonomously learns to localize the details valuable for fine-grained recognition, such as the beaks of birds, the car lights, the airplane engines, the propellers, etc. This proficiency is notably achieved through training that is guided solely by image-level category labels, without explicit instructions on the spatial details to focus on. In some difficult scenarios, where the primary discriminative features may be concealed or indistinct, our model can actively determine to observe with more fixations, seeking additional secondary features as alternative cues for accurate classification.

\subsection{Efficient processing of visual data from real driving scenarios}

The ImageNet and fine-grained recognition datasets are standard visual understanding benchmarks collected from the Internet. Consequently, in general, many images within them have been centered toward the relevant objects or content by human photographers and users. Nevertheless, AdaptiveNN does not rely on this object-centric precondition. Similar to human visual systems, AdaptiveNN is applicable to more general and complex scenarios, for example, efficiently processing non-object-centric images collected in the wild without specified pre-processing. To demonstrate this, we evaluate our model with the traffic sign recognition task on the Swedish traffic signs dataset (STSD) \cite{larsson2011using}. The dataset consists of high-resolution road-scene images collected on real moving vehicles. The traffic signs of interest are usually very small, distributed diversely, and not clear in many cases, presenting a realistic challenge.

For this more generalized task of visual perception in natural environments, AdaptiveNN markedly outperforms traditional non-adaptive models, achieving efficiency gains greater than an order of magnitude. Illustrated in Fig. \ref{fig:fig4}a and Supplementary Data Tab. \textcolor{blue}{16-17}, the strongest baseline, ResNet-50 with 960$^2$ inputs, acquires an accuracy of 90.2\% with $\sim$76 GFLOPs/image, while our model performs on par with it using only $\sim$2.7 GFLOPs/image, yielding an 27.9$\times$ reduction of inference cost. This substantial enhancement can be elucidated through the qualitative analysis in Fig. \ref{fig:fig4}b. The visual fixations localized by our model adaptively center its `retina' on the small, task-relevant regions within the expansive, intricate, and cluttered visual scenes, mirroring the efficiency characteristic of human visual perception. Conversely, conventional non-adaptive models typically process all pixels equivalently, which is inefficient and vulnerable to overfitting. Moreover, when AdaptiveNN initially misidentifies the location of traffic signs, it tends to recognize its error, infer the signs' true locations based on the current information, and attempt to rectify this mistake in subsequent fixations.

\begin{figure*}[!t]
    \centering
    \vskip -0.45in
        \includegraphics[width=\textwidth]{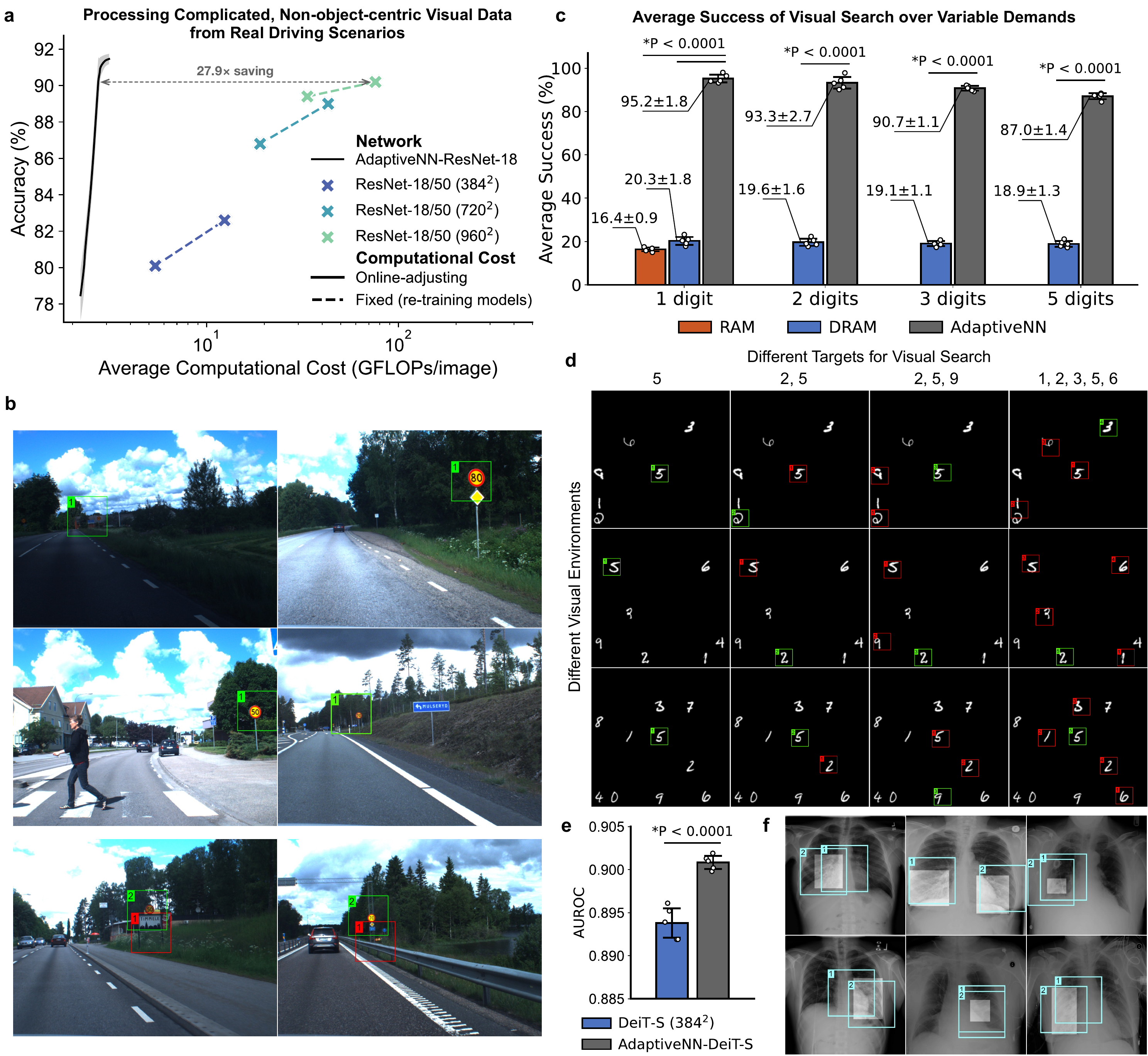}
    \caption{
        \textbf{Assessment of AdaptiveNN in more general visual perception scenarios: processing images from real driving and medical scenarios, and visual search tasks with variable demands.}
        \textbf{(a)} Comparisons of AdaptiveNN and conventional non-adaptive models in processing complicated, non-object-centric real-world scenes: Top-1 validation accuracy versus average computational cost for inferring the model (log-scale). We consider the traffic sign recognition task on the Swedish traffic signs dataset (STSD) \cite{larsson2011using}, composed of 960$\times$1,280 road-scene images collected on real moving vehicles. ResNets are deployed as backbones since convolutional networks tend to be more efficient for processing high-resolution inputs.
        \textbf{(c)} Average success rates of visual search tasks. Here `$n$ digits' indicates the number of target digits, while the bars indicate the mean $\pm$ standard deviations of five randomly generated visual search tasks with various target categories (yet maintaining a constant number of targets). Success is defined as accurately retrieving exactly all the digits specified by a given task.
        \textbf{(e)} Area under the receiver operating characteristic curve (AUROC) of the RSNA pneumonia detection task \cite{shih2019augmenting}. All models are trained to predict the presence or absence of pneumonia based on image-level labels. Here we do not perform adaptive termination in AdaptiveNN and mainly focus on the AUROC after processing all fixations, since efficiency may not be a major focus of medical diagnosis tasks.
        \textbf{(b-f)} Qualitative evaluation results corresponding to \textbf{(a-c)}. Boxes represent visual fixation locations, with colors indicating the model's decision to either continue (red) or terminate (green) observation at that step. Step indices are annotated at the upper left corner of each box. Particularly, in \textbf{f}, lighter boxes show the pneumonia regions annotated by human clinicians (this localization information is not utilized for training).
        *Independent samples $t$-test. Except for \textbf{(c)}, all error bars show the standard deviations of five independent trials with different random seeds.
        \label{fig:fig4}
        }
    \vskip 0.2in
\end{figure*}

\subsection{Addressing vision tasks with flexible requirements}

Even when confronting the same visual environment, humans can flexibly adjust their visual perception behaviors, such as the locations and numbers of fixations, in response to the specific requirements of the task at hand \cite{najemnik2005optimal, ma2011behavior, wolfe2017five}. To investigate whether AdaptiveNN can acquire such human-like adaptability, we considered a visual search scenario: we generate 224$^2$ images, each randomly populated with 6 to 10 digits against a black background without repetition of digits. A model is trained to identify the locations of certain given digits within each input, where the categories and number of targets are assumed to be flexibly changed. Each specified setup of targets is defined as an individual visual search task.

Fig. \ref{fig:fig4}c and Supplementary Data Tab. \textcolor{blue}{18} summarizes the quantitative evaluation results. We estimate the success rate of retrieving exactly all the targets demanded by a visual search task in various visual environments, and report its expected value over different tasks. AdaptiveNN maintains an average success accuracy of approximately 90\% consistently with a varying number of searching targets. On the contrary, existing popular models that aim to mimic human sequential visual perception, like RAM \cite{mnih2014recurrent} and DRAM \cite{ba2014multiple}, generally do not exceed a success rate of around 20\%, markedly underperforming AdaptiveNN by $>$4.5$\times$ in most instances. Moreover, Fig. \ref{fig:fig4}d illustrates AdaptiveNN's capability to adaptively modulate its fixation selection and observation termination strategies conditioned on each input and specific visual task. It does not fixate on more regions after all targets have been localized, and intriguingly, it learns to efficiently identify two adjacent targets using a single fixation. In general, these observations show that AdaptiveNN can acquire a robust human-like adaptability in task-specific visual perception behaviors, considerably outperforming previous works.

\subsection{Interpretability-critical tasks: image processing in medical scenarios}

A predominant merit of AdaptiveNN is its capacity for enhanced interpretability through examining its visual fixation patterns (as shown in Fig. \ref{fig:fig2}a, \ref{fig:fig4}b, \ref{fig:fig4}d, and Extended Data Fig. \ref{fig:fig3}b-\ref{fig:fig3}e). Built upon this insight, we further evaluate our model's utility in vision tasks where interpretability is of vital importance. Specifically, we take the medical diagnosis scenario of detecting pneumonia from chest X-ray images as a representative example \cite{shih2019augmenting}. AdaptiveNN is trained only using the image-level labels indicating the presence or absence of pneumonia. As demonstrated in Fig. \ref{fig:fig4}e, it exhibits a significantly superior AUROC (area under the receiver operating characteristic curve) on validation data than the conventional non-adaptive model (P$<$0.0001). Furthermore, despite the absence of explicit localization guidance during training, the visual fixations identified by AdaptiveNN (see Fig. \ref{fig:fig4}f) align closely with the pulmonary opacity regions annotated by human clinicians (18 board-certified radiologists from 16 institutions). This concordance reveals the potential value of AdaptiveNN in developing AI applications that demand not only precision but also good interpretability, such as medical applications.

\begin{figure*}[!t]
    \centering
    \includegraphics[width=\textwidth]{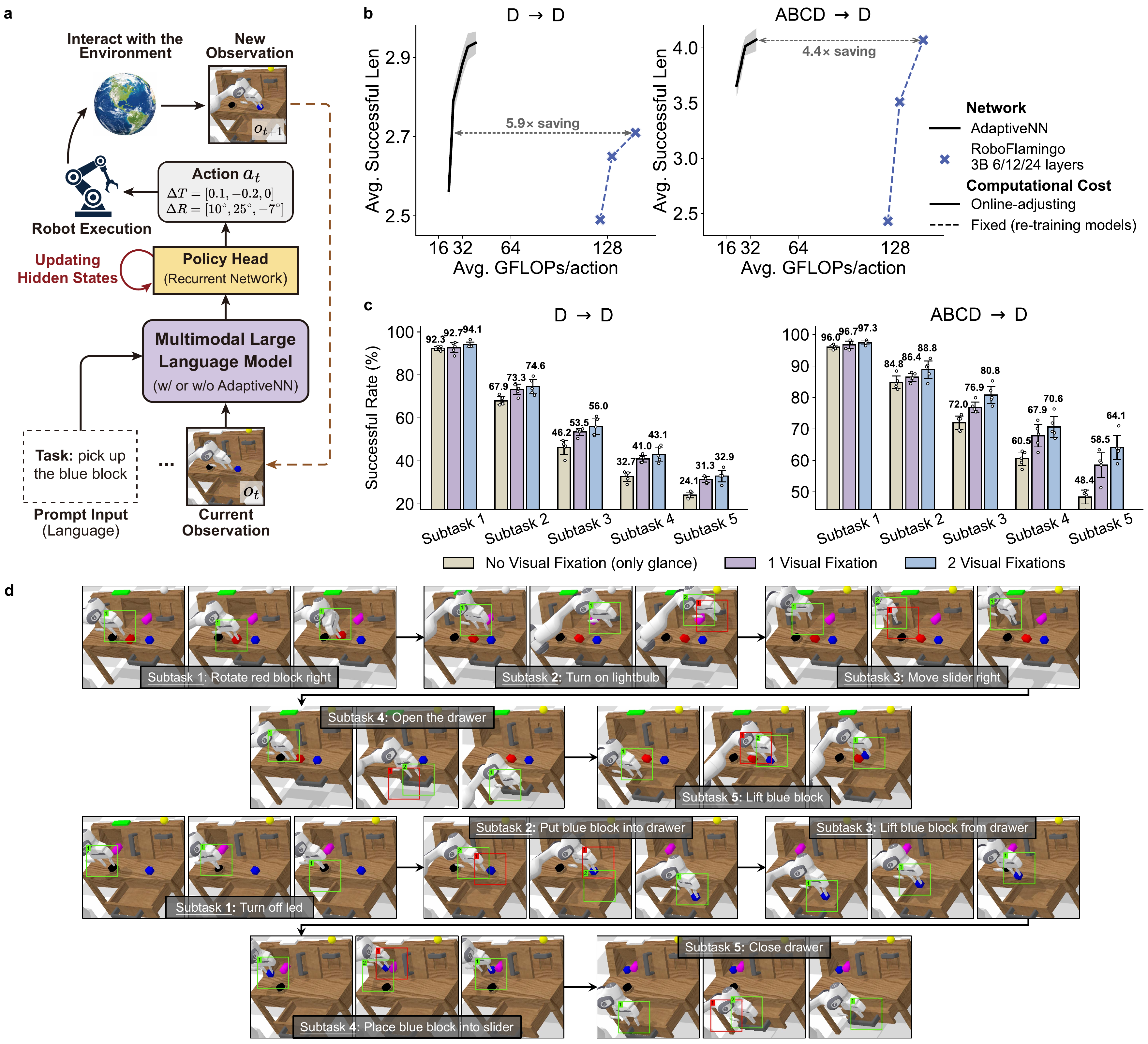}
    \caption{
        \textbf{Performance of the embodied multimodal large language models (MLLM) based on AdaptiveNN.}
        \textbf{(a)} A schematic overview of the embodied MLLM. The prompt input specifics the task, and the MLLM iteratively perceives the environment to execute appropriate robotic actions, \emph{i.e.}, six-degree-of-freedom (6-DoF) transformation vectors in 3D space. The next observation reflects the outcome of the preceding actions. A recurrent policy head integrates information from all previous observations.
        \textbf{(b)} Comparisons of AdaptiveNN-based MLLM and non-adaptive MLLM using identical backbones on CALVIN: average successful length (of 1000 5-task sequences) versus average computational cost for inferring the model. For the non-adaptive models, computational costs are modulated by adjusting model sizes. D$\to$D and ABCD$\to$D indicate different scales of training data.
        \textbf{(c)} Relationship between average successful rates of each subtask within task sequences and the number of visual fixations, assuming that all samples utilize the same number of visual fixations. 
        \textbf{(d)} Qualitative assessment of two representative 5-task sequences. Boxes marks the fixation locations and colors indicates the model's decision to conclude (green) or continue (red) observation at each step. Step indices are presented at the top left of the boxes. The prompt inputs for specifying tasks are displayed within the black boxes.
        All error bars show the standard deviations of five independent trials with different random seeds.
        \label{fig:embodiedAI_results}
        }
    \vskip 0.4in
\end{figure*}

\subsection{Embodied multimodal large language models based on AdaptiveNN}

The formulation of AdaptiveNN is general enough to be deployed as the perceptual module of an embodied agent that interacts with dynamic physical environments. To demonstrate its versatility, we deploy AdaptiveNN on top of a multimodal large language model (MLLM). As illustrated in Fig. \ref{fig:embodiedAI_results}a, the MLLM receives language as prompt inputs to specify the task of interest, observe the current environment (with or without AdaptiveNN), and update the recurrent policy network to execute an appropriate action, subsequently affecting the environments, evoking the next observation-to-action process. The MLLM's network architecture is based on RoboFlamingo \cite{li2024visionlanguage}, as detailed in Extended Data Fig. \ref{fig:embodiedAI_details}a and Section \ref{sec:imple_detail_arch_embodiedAI}. We assess the performance of AdaptiveNN-based MLLM using the CALVIN Long-Horizon Multi-Task Language Control benchmark (LH-MTLC) \cite{mees2022calvin}, where an agent aims to successfully complete task sequences, each comprising five subtasks described in natural language. The model performance is quantified as the average successful length (0 to 5) across 1000 task sequences, as detailed in Extended Data Fig. \ref{fig:embodiedAI_details}b. We consider two benchmark settings using identical validation tasks and different training data scales (\emph{i.e.}, D$\to$D, ABCD$\to$D).

Fig. \ref{fig:embodiedAI_results}b and Supplementary Data Tab. \textcolor{blue}{19-20} demonstrate that AdaptiveNN saves the average computational cost by $4.4-5.9\times$ without sacrificing effectiveness compared to the non-adaptive baselines. Besides, AdaptiveNN is notably more flexible in adjusting its computational cost online without necessitating retraining.
Supplementary Data Tab. \textcolor{blue}{21-22} further compare AdaptiveNN and the baselines in terms of the success rates of different types of tasks.
Fig. \ref{fig:embodiedAI_results}c and Supplementary Data Tab. \textcolor{blue}{23-24} report the performance corresponding to each fixed number of visual fixations, depicting a progressively increasing trend of average successful length, which is more pronounced for large-scale, diverse training data such as ABCD$\to$D.
Representative qualitative results are presented in Fig. \ref{fig:embodiedAI_results}d. AdaptiveNN succeeds in learning to fixate on the task-relevant objects specified by the input language prompts, as well as to capture their interactions with the robotic operational components. In other words, AdaptiveNN dynamically determines `where to look' based on both the visual environments themselves and the variable task prompts.
Moreover, when the required action involves precise and fine-grained control, our model tends to leverage more visual fixations for more careful perception. Otherwise, relatively fewer fixations will be adopted to maximally save the cost.
In summary, AdaptiveNN exhibits significantly improved computational efficiency, adaptability, and interpretability. These marked merits are consistent with our previous findings.

\begin{figure*}[!ht]
    \centering
    \vskip -0.6in
    \includegraphics[width=\textwidth]{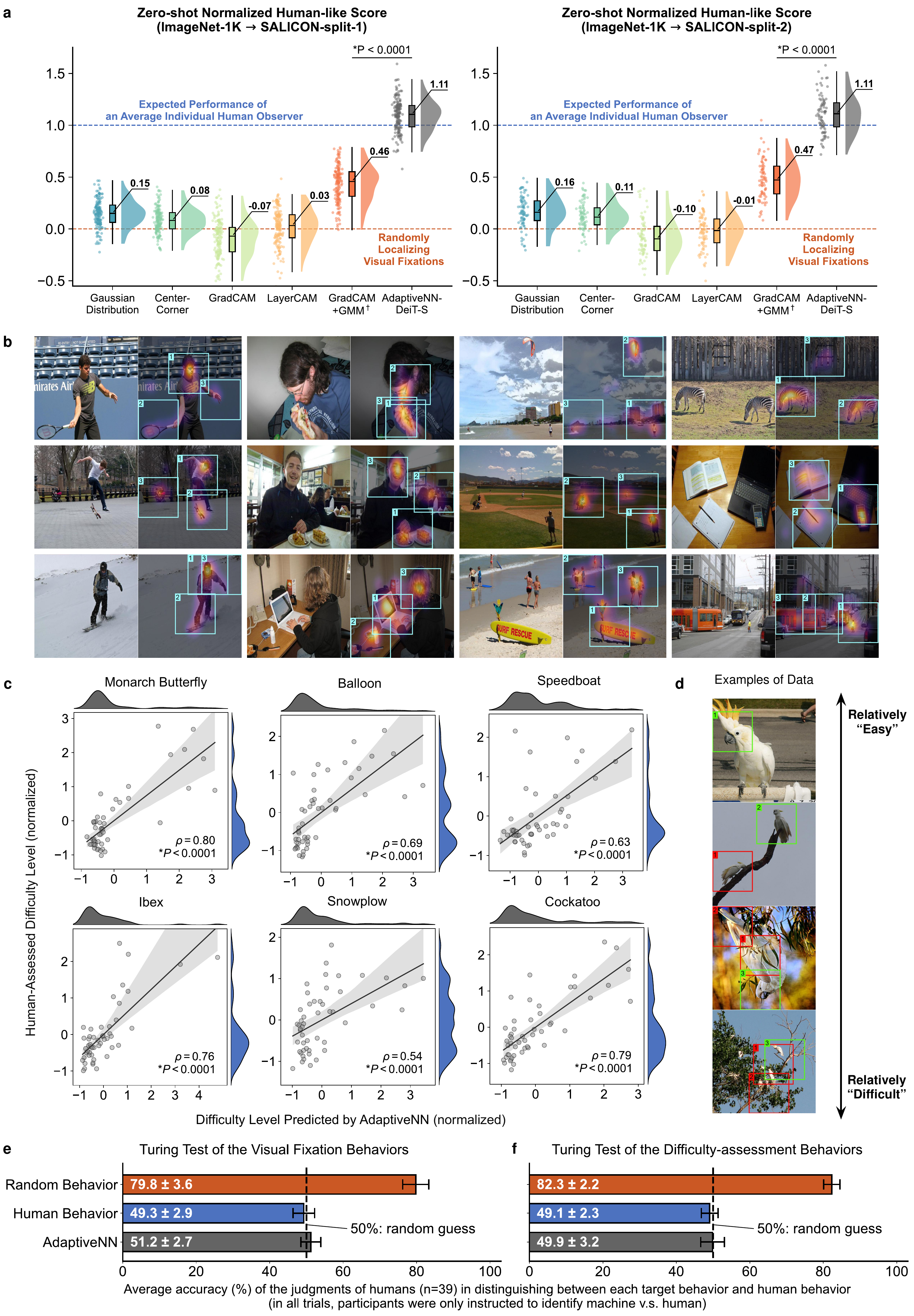}
    \vskip -0.5in
\end{figure*}

\begin{figure*}[!ht]
    \caption{
        \textbf{Behavioral comparisons between AdaptiveNN and human vision.} 
        \label{fig:fig5}
    \textbf{(a)} Normalized human-like scores, which quantify the probability that the ground truth gazing centers of human vision (whose distribution is estimated by averaging across $\sim$60 observers' visual perception behaviors) fall into the visual fixation regions localized by AdaptiveNN or comparative strategies. 
    The raw results are normalized with respect to the expected performance of selecting fixation regions with the gazing locations of an individual human observer (1.0 on the $y$-axis) and uniformly at random (0.0 on the $y$-axis). 
    Each point represents the result over a mini-batch of data, while boxplots depict the distribution of results.
    The evaluation is based on the SALICON dataset \cite{jiang2015salicon}. Our model is trained on ImageNet, having never seen the data in SALICON. This `zero-shot' paradigm evaluates directly transferring AdaptiveNN's perceptual behaviors to novel, complex environments, with a fixed number of fixations, mirroring the collection procedure of human gazing centers.
    Baselines for comparison incorporate selecting fixation regions using i) pre-defined rules; ii) class activation maps (CAMs); iii) CAMs augmented with a Gaussian mixture model (GMM). See Supplementary Section \textcolor{blue}{B} for the details of these baselines. $^\dagger$: GMM introduces additional computation.
    *Independent samples $t$-test.
    \textbf{(b)} Qualitative comparisons between the ground truth density maps of human gazing centers (heat maps) and AdaptiveNN fixation regions (boxes). Boxes indicate fixation locations, with step indices annotated at the upper left corner of each box.
    \textbf{(c)} Correlation of human-assessed difficulty scores (averaged across $n$=10 subjects) and difficulty levels (state values) evaluated by the \emph{Vision Agent} of AdaptiveNN. Without loss of generality, the state values are taken from the first step of sequential perception processes. Results are based on six representative categories of data in the ImageNet validation set.
    $\rho$: Pearson correlation coefficients. *Correlation $t$-test.
    \textbf{(d)} Visualization examples of the `easy' and `difficult' data identified by AdaptiveNN.
    \textbf{(e-f)} Results of `visual Turing tests'. Human judges ($n$=39) are randomly given paired examples of visual perception behaviors from `humans' and `one within \{AdaptiveNN, humans, random behaviors\}'. They are instructed to identify the machine (even in cases when the pairs of `random v.s. human' or `human v.s. human' are given, which serve as control groups for comparison). 
    Bars show the mean accuracy across human judges and the corresponding 95\% confidence interval. Ideal performance is 50\%, where the machine is indistinguishable from human behaviors in these binary choice tasks.
    }
    \vskip 0.2in
\end{figure*}

\subsection{Comparisons between AdaptiveNN and human visual perception}

AdaptiveNN also emerges as a potent computational tool for probing human visual cognition under controlled experimental conditions. This can be uncovered with the marked consistency between humans and AdaptiveNN in side-by-side evaluations on the same tests of visual perception behaviors. Our experimental protocols are detailed in Section \ref{sec:intro_tasks_vs_human}.

First, we examine the consistency of the locations of visual regions that humans and AdaptiveNN fixate on. We assess the spatial-wise adaptiveness of human vision using the SALICON benchmark dataset \cite{jiang2015salicon}, which comprises images each observed by approximately 60 participants recruited via Amazon Mechanical Turk (AMT). These participants were asked to freely view each image for 5 seconds without specific instructions on where to direct their gaze, allowing for an unbiased recording of their visual gazing center points. In Fig. \ref{fig:fig5}a, we utilize the aggregate density map of all $\sim$60 observers' gazing points as the ground truth, against which we evaluate the probability that the real focal centers of human vision fall into the visual fixation regions localized by AdaptiveNN. Here AdaptiveNN adopts the same experimental paradigm as humans: it produces a fixed number of fixations, and it is learned on ImageNet, having never seen or been trained using the data in SALICON. In addition to our model, we also report the performance corresponding to the expected behavior of selecting visual fixation regions following the gazing locations of an arbitrary single human (one of $\sim$60 observers) or uniformly at random, denoted by 1.0 (human) and 0.0 (random) on the $y$-axis of Fig. \ref{fig:fig5}a, yielding a metric named `normalized human-like score' (see Section \ref{sec:intro_tasks_vs_human} for details).

Fig. \ref{fig:fig5}a and Supplementary Data Tab. \textcolor{blue}{25-26} summarize the quantitative results on the two splits of SALICON. In terms of the alignment with the ground truth spatial-adaptive human visual perception behaviors, AdaptiveNN performs on par with or surpasses that of an average individual human observer, consistently registering normalized human-like scores exceeding 1.0. In contrast, pre-defined fixation localization policies and CAM-based methods yield scores ranging from -0.1 to 0.4, generally failing to significantly outperform a random selection strategy. Furthermore, Fig. \ref{fig:fig5}b offers a qualitative comparison between the ground truth density maps of human gazing locations (heat maps) and the fixation regions selected by AdaptiveNN (boxes). Our model produces human-like patterns in many cases, frequently being attracted by faces, hands, human bodies, human actions, or objects intimately associated with human activity, such as food, computers, skateboards, tennis rackets, and buses. This underscores our model's ability to emulate sophisticated perceptual behaviors that reflect complex human visual strategies. Rather interestingly, these human-like patterns emerge from solely being trained on the ImageNet image recognition task, without reliance on the typical inductive biases innate to human cognition (\emph{e.g.}, biases concerning objects, agents, space, and biological motion \cite{kellman1983perception, spelke1992origins, spelke1994initial, viola2004can, simion2011processing, ullman2012simple, stahl2015observing, reynolds2018development}).

Second, orthogonal to investigating spatial adaptiveness, we examine the extent to which AdaptiveNN aligns with human judgments in assessing which visual environments are more challenging for a given task and necessitate more thorough scrutiny. Specifically, human participants ($n$=10) were tasked with rating each image within six representative categories of the ImageNet validation set, according to the difficulty of classifying each image. In Fig. \ref{fig:fig5}c and Supplementary Data Tab. \textcolor{blue}{27}, these human-assessed difficulty scores are normalized on an individual basis, averaged across participants, and then compared against the normalized state values predicted by the AdaptiveNN learned on ImageNet, which reflect our model's assessments of each image's difficulty level. The evaluations made by AdaptiveNN demonstrate a strong correlation with human judgments (all P$<$0.0001; Pearson correlation coefficient $\rho\!\in\![0.54, 0.80]$). Fig. \ref{fig:fig5}d further presents qualitative examples of the relatively `easy' and `difficult' data identified by our model. The visualizations are generally reasonable; images depicted from typical perspectives with clear, relevant content tend to be deemed `easy'. These findings suggest that our model closely approximates human-like proficiency in dynamically allocating visual perception resources across varied visual environments -- a critical characteristic of human visual systems.

Finally, we establish several `visual Turing tests' \cite{lake2015human} to compare AdaptiveNN with human vision. In these tests, human judges are given paired examples of visual perception behaviors from humans and our model, and instructed to identify which come from the machine. The results are evaluated using the accuracy of human judgments: 50\% indicates perfectly human-like behaviors that are indistinguishable from humans, while 100\% represents the worst case. Each participant ($n$=39) has completed 216 trials with blocked feedback to investigate both the spatial-wise visual fixation behaviors and the sample-wise visual difficulty assessment behaviors of AdaptiveNN, with the judgments analyzed individually and in aggregate. Importantly, we randomly replace the `machine' behaviors of some trials with `human' or `random' behaviors without letting participants know, and separately evaluate the accuracy of these trials, establishing two randomized control groups as baselines for comparison. 
The detailed procedure of `visual Turing tests' is illustrated in Extended Data Fig. \ref{fig:visual_turing_details}a. Some examples of the trials can be found in Supplementary Data Fig. \textcolor{blue}{1-2}.

The results are summarized in Fig. \ref{fig:fig5}e-\ref{fig:fig5}f, Extended Data Fig. \ref{fig:visual_turing_details}b, Supplementary Data Tab. \textcolor{blue}{28-29}, and Supplementary Data Fig. \textcolor{blue}{3-4}. In both scenarios, human judges achieve only 50-51\% accuracies in correctly identifying `AdaptiveNN v.s. human', which do not acceptably outperform random guessing in statistics (t(38) = 0.90, -0.09, P = 0.37, 0.93). Additionally, these `machine v.s. human' judgments do not exhibit a significant difference from the 49-50\% accuracies of the `human v.s. human' baselines (t(38) = 0.97, 0.40, P = 0.33, 0.69). In contrast, `random v.s. human' results in considerably easier Turing test tasks (accuracies $\ge$80\%). These observations demonstrate that in general, AdaptiveNN approaches an indistinguishable level from the adaptive perceptual behaviors of human vision.

\subsection{Ablation studies and analysis of the design of AdaptiveNN}

In pursuit of a comprehensive understanding of our work, Extended Data Fig. \ref{fig:fig6} and Supplementary Data Tab. \textcolor{blue}{30-49} establishes a series of evaluations uncovering that the components of AdaptiveNN function as we expect, and that our design markedly outperforms alternative choices. 

Extended Data Fig. \ref{fig:fig6}a and Supplementary Data Tab. \textcolor{blue}{30-37} examine the effectiveness of a broad array of possible strategies for localizing visual fixations. The reinforcement learning algorithm of AdaptiveNN achieves significantly higher validation accuracies than the most competitive baseline across all the scenarios (all P$<$0.0001), especially with limited numbers of fixations. Intriguingly, although GardCAM has been widely used as a feasible algorithm to visualize the regions relevant to the decision-making of deep networks \cite{zhou2023foundation, wang2023incorporating}, its application in selecting visual fixations does not yield competitive performance against AdaptiveNN, even though it is augmented with a Gaussian mixture model and additional computation. Moreover, other possible methods for training the fixation selection policy, such as spatial transformer net and Gumbel-Softmax, do not exhibit the potential to approach reinforcement learning. They fail to secure noteworthy gains over pre-defined non-adaptive policies like random or Gaussian sampling.

In Extended Data Fig. \ref{fig:fig6}b and Supplementary Data Tab. \textcolor{blue}{38-45}, we show that the state values predicted by the \emph{Vision Agent} of AdaptiveNN are strongly correlated with the test loss of the validation data. This correlation indicates that, for a given test sample whose label is unknown, we can leverage its associated state values as reliable proxies of how far the outputs of our model are from the accurate prediction. This phenomenon is highly consistent with our goal of introducing the value network (Fig. \ref{fig:fig1_2}d). In this sense, Extended Data Fig. \ref{fig:fig6}c and Supplementary Data Tab. \textcolor{blue}{46-49} provide further evidence supporting that the strategy of concluding the observation processes of the samples exhibiting smaller rather than larger state values is beneficial for a higher overall computational efficiency. This strategy underscores the efficacy of the value network in guiding the allocation of computational resources toward optimizing model performance.

Extended Data Fig. \ref{fig:fig6}d-\ref{fig:fig6}e present system-level comparisons against representative state-of-the-art methods for enhancing the energy efficiency of deep networks. Extended Data Fig. \ref{fig:fig6}d focuses on the recently proposed algorithms that leverage the spatial redundancy of visual data, whereas Extended Data Fig. \ref{fig:fig6}e considers existing multi-exit models characterized by an online-adjustable computational cost. AdaptiveNN outperforms all of them by marked margins when consuming less or comparable amounts of computation, even though the major motivation of our work is to emulate the visual perception behaviors of humans to drive a paradigm shift from `passive' to `active and adaptive' vision models, instead of attaining optimal engineering performance.

\section{Discussion}
\label{sec:discussion}

Human vision is distinguished by its remarkable flexibility to adapt to spatial regions with different content, varying complexities of visual environments, diverse task demands, and fluctuating resource availability for perception. 
In contrast, current machine vision models mainly adopt `passive' paradigms, which usually perceive everything everywhere in parallel with an identical computational graph, regardless of the specific characteristics of variable visual environments, tasks and resources.
This lack of adaptive adjustment results in an `impossible triangle' formed by high-dimensional visual inputs, large-scale neural networks, and efficiency: under the scaling laws, the first two tend to be essential for complex real-world vision problems, but they significantly compromise efficiency. 
This inherent limitation impedes both future advancements and application in diverse real-world scenarios.
In this article, we aim to address this issue by enabling neural networks to computationally emulate the adaptive behaviors of human visual systems, driving a paradigm shift from `passive' to `active and adaptive' vision models.

Specifically, we establish an AdaptiveNN framework that perceives scenes by sequentially fixating on pertinent regions, incrementally integrating information across fixations, and actively concluding its observation to accomplish the task of interest. Theoretical analyses suggest that such models can be trained using reinforcement learning without specialized supervision, relying solely on simple task-driven objectives. Our resulting method, AdaptiveNN, substantially reduces the computational cost of well-performing computer vision models by up to $28\times$ without sacrificing accuracy (Fig. \ref{fig:fig2}b, \ref{fig:fig2}c, \ref{fig:fig4}a, \ref{fig:embodiedAI_results}b, and Extended Data Fig. \ref{fig:fig3}a). 
Moreover, it exhibits human-like flexibility in adjusting its inference cost online without necessitating additional training (Fig. \ref{fig:fig2}b, \ref{fig:fig2}c, \ref{fig:fig4}a, \ref{fig:embodiedAI_results}b, and Extended Data Fig. \ref{fig:fig3}a), as well as in customizing its perceptual strategies conditioned on variable task demands through modifying the training objective (Fig. \ref{fig:fig4}c, \ref{fig:fig4}d) or introducing natural language prompts as inputs (Fig. \ref{fig:embodiedAI_results}).
Additionally, AdaptiveNN is also distinctive in its enhanced interpretability through analyzing its fixation patterns (Fig. \ref{fig:fig2}a, \ref{fig:fig4}b, \ref{fig:fig4}d, \ref{fig:fig4}f, \ref{fig:embodiedAI_results}d, and Extended Data Fig. \ref{fig:fig3}b-\ref{fig:fig3}e), in a manner akin to understanding human visual systems \cite{ward2002fast, itti2001computational, henderson2003human, valliappan2020accelerating}. 
These aforementioned favorable attributes align closely with the extensively recognized advantages of human visual systems \cite{ward2002fast, wolfe2004attributes, najemnik2005optimal, ma2011behavior, mnih2014recurrent, henderson2017meaning, gottlieb2018towards, han2021dynamic, hanning2023dissociable}. In this sense, we believe that our work significantly advances the resolution of the open challenge raised by LeCun, Bengio, and Hinton \cite{lecun2015deep}, opening up a new horizon for developing more energy-efficient, adaptable, and interpretable computer vision models. These properties are critical for realistic scenarios such as wearable devices, mobile phones, robotics, embedded devices, autonomous vehicles, and medical AI applications.

The design and theoretical analyses of AdaptiveNN have carefully avoided involving strong assumptions or specialized implementation configurations. As a consequence, it is compatible with a wide array of state-of-the-art representation learning backbones \cite{ronneberger2015u, he2016deep, huang2017densely, touvron2021training, liu2021swin, tan2021efficientnetv2, dong2022cswin}, which can be readily incorporated as the feature-extraction module within our model. Furthermore, the outputs of AdaptiveNN interface seamlessly with various vision tasks, such as recognition, medical diagnosis or prognosis \cite{zhou2023foundation}, segmentation \cite{isensee2021nnu}, locating visual objects \cite{redmon2016you, liu2016ssd}, and embodied multimodal large language models (MLLM) \cite{li2024visionlanguage}.
In this article, we first demonstrate our model's generalizability using several representative backbone networks: ResNet (convolutional neural network) \cite{he2016deep} and DeiT (vision Transformer) \cite{touvron2021training}, through the lens of two common, foundational elements of diverse vision tasks: `what' and `where' \cite{larochelle2010learning}, namely semantic understanding and element localization (Fig. \ref{fig:fig2}, \ref{fig:fig4}, and Extended Data Fig. \ref{fig:fig3}). 
To demonstrate the versatility of AdaptiveNN, we further deploy it as the perceptual module of a language-driven embodied MLLM (Fig. \ref{fig:embodiedAI_results}).
Our considerations of building up a general framework not only verify the extensive applicability of our findings, but also facilitate a focused and comprehensive examination of the benefits derived from formulating human-like adaptive visual perception. Additionally, we believe that the sufficient flexibility of our approach offers promising avenues for further extensions.

Our results are also appealing in their contributions to the ongoing discourse on human visual cognition, particularly concerning the debate over the role of innateness in learning perceptual behaviors. This debate has persisted for centuries, questioning whether certain visual behaviors are inherent at birth or learned through experience \cite{locke1847essay, leibniz1997new, zaadnoordijk2022lessons, orhan2024learning}. Some developmental psychologists have postulated that innate biases, such as those related to objects, agents, space, and biological motion \cite{kellman1983perception, spelke1992origins, spelke1994initial, viola2004can, simion2011processing, ullman2012simple, stahl2015observing, reynolds2018development}, may shape the process of learning from the environment. Conversely, others argue that visual capabilities can develop in the absence of such biases, heavily influenced by the richness of the developing child's experience \cite{elman1996rethinking, orhan2024learning}. Our efforts revisit this age-old `nature versus nurture' debate from a modern perspective: we demonstrate the possibility of investigating these aforementioned claims via rigorous computational simulations. AdaptiveNN emerges as a general, scalable, and sufficiently human-like proxy that learns from visual data with maximally eliminating the innate biases or abilities in humans. By being trained solely on real-world visual tasks like ImageNet object recognition, AdaptiveNN not only achieves beyond human-level accuracies \cite{russakovsky2015imagenet} (Fig. \ref{fig:fig2}), but also exhibits mostly indistinguishable behaviors from humans (Fig, \ref{fig:fig5}), in terms of either the `eye movement' patterns in novel scene observation or assessing the `difficulty levels' of various visual environments. These findings suggest that many adaptive behaviors and basic capabilities of human vision could indeed be acquired through routine visual tasks, without necessitating strong innate biases. In this sense, we are among the first to leverage advanced AI methods like deep networks and reinforcement learning to explore fundamental cognitive science questions under controlled experimental conditions \cite{bambach2018toddler, orhan2020self, orhan2024learning, vong2024grounded}. Additionally, we hope that our work will inspire new interdisciplinary collaborations between machine learning and broader fields, given the critical role of eye movements in probing into human vision and mind \cite{keller1983mind, rayner1998eye, henderson2003human, hayhoe2005eye, najemnik2005optimal, land2009vision, valliappan2020accelerating}, and their widespread applications to various research communities, such as visual content analysis \cite{itti2001computational}, graphic or web designs \cite{nielsen2010eyetracking, bylinskii2017learning}, driving \cite{land2009looking}, gaming \cite{smith2006use}, and medical research \cite{jones2008absence, bigolin2022reflacx}.

In conclusion, the encouraging findings of this article demonstrate a new avenue for developing deep learning methodologies inspired by human vision. The efficacy of AdaptiveNN and its behavioral alignment with human visual systems underscore its potential. We anticipate future explorations in this direction to benefit both the AI and cognitive science communities -- promising not only to foster the creation of next-generation computer vision models that are efficient, adaptable, and interpretable, but also to provide powerful computational tools for investigating human behavioral and learning processes.

We also believe our work offers valuable insights into equipping computer vision models with adaptive sequential `reasoning'-like perception capabilities using reinforcement learning, analogous to the approach employed in DeepSeek-R1 \cite{guo2025deepseek}. Towards this direction, we demonstrate how to model visual perception tasks as sequential decision procedures, and reveal why and how such models should be trained using reinforcement learning. Our resulting models can adaptively employ a larger number of strategically selected visual fixations to solve more challenging vision tasks.

\section{Methods}
\label{sec:method}

\subsection{Theoretical learning principles of AdaptiveNN}
\label{sec:theory}

Here we present the proof of Theorem \ref{theorem:1} and Eq. (\ref{eq:rl_target_limit}).

\begin{theorem*}
    The gradients of $\mathrm{L}(\bm{\theta})$ can be decomposed into a combination of representation learning and self-rewarding reinforcement learning objectives:
    \begin{equation}
        \nabla_{\bm{\theta}}\mathrm{L}(\bm{\theta}) = 
            \nabla_{\bm{\theta}}\mathrm{L}_{\textnormal{rep}}(\bm{\theta}) + 
            \nabla_{\bm{\theta}}\mathrm{L}_{\textnormal{rl}}(\bm{\theta}),
    \end{equation}
    where
    \begin{equation}
        \begin{split}
            \nabla_{\bm{\theta}}\mathrm{L}_{\textnormal{rep}} 
            &= 
            \underbrace{
            \mathbb{E}_{\bm{X}, y, \bm{l}_{1:T}} 
            \sum\nolimits_{t=1}^{T}
            P(t_{\textnormal{o}}=t)
            \nabla_{\bm{\theta}}  \mathcal{L}(y, q(\bm{\theta}, \bm{X}, \bm{l}_{1:t}))
            }_{\textnormal{representation learning}} 
                \\
            \nabla_{\bm{\theta}}\mathrm{L}_{\textnormal{rl}}
            &=
            \underbrace{
                -\mathbb{E}_{\bm{X}, y, \bm{l}_{1:T}} 
                \sum\nolimits_{t=1}^{T}
                \left[
                    \left(
                        \sum\nolimits_{t'=t}^{T} r_{t'}
                    \right)
                    \nabla_{\bm{\theta}}  \log p(\bm{l}_{t} \lvert \bm{\theta}, \bm{X}, \bm{l}_{1:(t-1)}) 
                \right]
                }_{\textnormal{self-rewarding reinforcement learning}},
                \\
            & \ \ \ \ \ \ \ \ r_{t'} = -P(t_{\textnormal{o}}=t') \mathcal{L}(y, q(\bm{\theta}, \bm{X}, \bm{l}_{1:t'})).
        \end{split}
    \end{equation}
\end{theorem*}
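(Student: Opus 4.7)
The plan is to prove Theorem \ref{theorem:1} by direct differentiation of $\mathrm{L}(\bm{\theta})$ under the integral sign, combined with the score-function (REINFORCE) identity. First I would expand the outer expectation over $t_{\textnormal{o}}$ as an explicit weighted sum, obtaining
\begin{equation*}
\mathrm{L}(\bm{\theta}) \;=\; \mathbb{E}_{\bm{X},y}\sum_{t=1}^{T} P(t_{\textnormal{o}}=t)\int_{\bm{l}_{1:t}} p(\bm{l}_{1:t}\lvert\bm{\theta},\bm{X})\,\mathcal{L}(y, q(\bm{\theta},\bm{X},\bm{l}_{1:t})).
\end{equation*}
Because both $p$ and $\mathcal{L}$ depend on $\bm{\theta}$, bringing $\nabla_{\bm{\theta}}$ inside the integral splits the integrand into two pieces via the product rule. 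The piece in which $\nabla_{\bm{\theta}}$ acts on $\mathcal{L}$ recombines with $p$ as a sampling density; marginalising the full-sequence expectation $\mathbb{E}_{\bm{l}_{1:T}}$ down to the relevant prefix $\bm{l}_{1:t}$ then produces exactly the representation-learning expression $\nabla_{\bm{\theta}}\mathrm{L}_{\textnormal{rep}}$ stated in the theorem.

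For the remaining piece, I would apply the identity $\nabla_{\bm{\theta}} p = p\,\nabla_{\bm{\theta}}\log p$ to rewrite $\int(\nabla p)\,\mathcal{L}$ as the expectation under $p$ of $\mathcal{L}\,\nabla_{\bm{\theta}}\log p(\bm{l}_{1:t}\lvert\bm{\theta},\bm{X})$. The chain-rule factorisation
\begin{equation*}
p(\bm{l}_{1:t}\lvert\bm{\theta},\bm{X}) \;=\; \prod_{\tau=1}^{t} p(\bm{l}_{\tau}\lvert\bm{\theta},\bm{X},\bm{l}_{1:(\tau-1)})
\end{equation*}
converts the log-gradient into the sum $\sum_{\tau=1}^{t}\nabla_{\bm{\theta}}\log p(\bm{l}_{\tau}\lvert\bm{\theta},\bm{X},\bm{l}_{1:(\tau-1)})$. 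Substituting and swapping the order of the $t$- and $\tau$-sums, the inner $t$-sum ranges over $\{\tau,\ldots,T\}$ and yields the tail weight $\sum_{t=\tau}^{T} P(t_{\textnormal{o}}=t)\,\mathcal{L}(y,q(\bm{\theta},\bm{X},\bm{l}_{1:t}))$. After relabelling $\tau\!\to\!t$ and the inner index $t\!\to\!t'$, and plugging in the definition $r_{t'} = -P(t_{\textnormal{o}}=t')\,\mathcal{L}(y,q(\bm{\theta},\bm{X},\bm{l}_{1:t'}))$, the tail weight equals $-\sum_{t'=t}^{T} r_{t'}$ and the whole expression collapses to the self-rewarding policy-gradient form $\nabla_{\bm{\theta}}\mathrm{L}_{\textnormal{rl}}$, with the overall minus sign supplied by the definition of $r_{t'}$.

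The main place where care is needed is the sum-swap and the attendant prefix bookkeeping: one must track which prefix of $\bm{l}_{1:T}$ each integrand actually depends on, so that $\mathbb{E}_{\bm{l}_{1:T}}$ can be reduced to $\mathbb{E}_{\bm{l}_{1:t}}$ consistently on both the representation-learning and the policy-gradient sides. Once the expectation over $\bm{l}_{1:T}$ is viewed as a single Fubini-admissible integral, finiteness of $T$ together with standard smoothness and integrability of $p$ and $\mathcal{L}$ justifies the interchange of $\nabla_{\bm{\theta}}$ with summation and integration. As an addendum for Eq.~(\ref{eq:rl_target_limit}), I would first telescope $\sum_{t'=t}^{T}\gamma^{t'-t}(r_{t'}-r_{t'-1})$ to $r_t-r_{t-1}$ as $\gamma\!\to\!0$ and to $r_T-r_{t-1}$ as $\gamma\!\to\!1$, then discard the $r_{t-1}$ term using the standard baseline lemma: since $r_{t-1}$ is $\bm{s}_{t-1}$-measurable, the score-function identity gives $\mathbb{E}\!\left[r_{t-1}\,\nabla_{\bm{\theta}}\log p_{\bm{\pi}}(\bm{l}_t\lvert\bm{s}_{t-1})\right] = 0$, recovering the two limits exactly as claimed.
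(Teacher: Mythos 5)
Your proposal is correct and takes essentially the same route as the paper's proof: product rule under the integral, the score-function identity $\nabla_{\bm{\theta}}p = p\,\nabla_{\bm{\theta}}\log p$, chain-rule factorisation of $\log p(\bm{l}_{1:t}\lvert\bm{\theta},\bm{X})$ into per-step conditionals, and the swap of summation order that produces the tail weights $\sum_{t'=t}^{T} r_{t'}$; your treatment of the $\gamma\to 0$ and $\gamma\to 1$ limits (telescoping the discounted differences, then discarding $r_{t-1}$ via the baseline identity $\mathbb{E}_{\bm{l}_t}[r_{t-1}\nabla_{\bm{\theta}}\log p_{\bm{\pi}}(\bm{l}_t\lvert\bm{s}_{t-1})]=0$) matches Eq. (\ref{eq:proof_limit_1}) and Eq. (\ref{eq:proof_limit_2}) exactly. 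The only cosmetic difference is order of operations: you expand the $t_{\textnormal{o}}$-expectation into a weighted sum before differentiating, whereas the paper differentiates first and then extends the prefix integral over $\bm{l}_{1:t_{\textnormal{o}}}$ to the full sequence $\bm{l}_{1:T}$.
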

\begin{proof}
Taking derivatives of $\mathrm{L}(\bm{\theta})$ with respect to $\bm{\theta}$, we have
\begin{small}
\begin{equation}
    \label{eq:derivatives_of_f}
    \begin{split}
        \nabla_{\bm{\theta}}\mathrm{L} &= 
            \mathbb{E}_{\bm{X}, y, t_{\textnormal{o}} \sim p(t_{\textnormal{o}})} \left[ 
                \int_{\bm{l}_{1:t_{\textnormal{o}}}} 
                p(\bm{l}_{1:t_{\textnormal{o}}}\lvert\bm{\theta}, \bm{X}) 
                \frac{
                        \partial \mathcal{L}(y, q(\bm{\theta}, \bm{X}, \bm{l}_{1:t_{\textnormal{o}}}))
                    }{
                        \partial \bm{\theta}
                    }
            \right. \\
             & \ \ \ \ \ \ \ \ \ \ \ \ \ \ \ \ \ \ \ \ \ 
            + \left.
                \int_{\bm{l}_{1:t_{\textnormal{o}}}} 
                \mathcal{L}(y, q(\bm{\theta}, \bm{X}, \bm{l}_{1:t_{\textnormal{o}}}))
                \frac{
                    \partial p(\bm{l}_{1:t_{\textnormal{o}}}\lvert\bm{\theta}, \bm{X}) 
                    }{
                        \partial \bm{\theta}
                    }
            \right] \\
        & = \mathbb{E}_{\bm{X}, y, t_{\textnormal{o}} \sim p(t_{\textnormal{o}})} 
            \int_{\bm{l}_{1:t_{\textnormal{o}}}} 
            p(\bm{l}_{1:t_{\textnormal{o}}}\lvert\bm{\theta}, \bm{X}) 
            \left[
                \frac{
                        \partial \mathcal{L}(y, q(\bm{\theta}, \bm{X}, \bm{l}_{1:t_{\textnormal{o}}}))
                    }{
                        \partial \bm{\theta}
                    }
            \right. \\
        & \ \ \ \ \ \ \ \ \ \ \ \ \ \ \ \ \ \ \ \ \ 
        + \left.
            \mathcal{L}(y, q(\bm{\theta}, \bm{X}, \bm{l}_{1:t_{\textnormal{o}}}))
            \frac{
                \partial \log p(\bm{l}_{1:t_{\textnormal{o}}}\lvert\bm{\theta}, \bm{X}) 
                }{
                    \partial \bm{\theta}
                }
        \right] \\
        & = \mathbb{E}_{\bm{X}, y, t_{\textnormal{o}} \sim p(t_{\textnormal{o}})} 
            \int_{\bm{l}_{1:t_{\textnormal{o}}}} 
            \int_{\bm{l}_{t_{\textnormal{o}}+1:T}}
            p(\bm{l}_{t_{\textnormal{o}+1}:T}
                \lvert\bm{\theta}, \bm{X}, \bm{l}_{1:t_{\textnormal{o}}})
            \\ 
            & \ \ \ \ \ \ \ \ \ \ \ \ \ \ \ \ \ \ \ \ \ 
            p(\bm{l}_{1:t_{\textnormal{o}}}\lvert\bm{\theta}, \bm{X}) 
            \left[
                \frac{
                        \partial \mathcal{L}(y, q(\bm{\theta}, \bm{X}, \bm{l}_{1:t_{\textnormal{o}}}))
                    }{
                        \partial \bm{\theta}
                    }
            \right. \\
        & \ \ \ \ \ \ \ \ \ \ \ \ \ \ \ \ \ \ \ \ \ 
        + \left.
            \mathcal{L}(y, q(\bm{\theta}, \bm{X}, \bm{l}_{1:t_{\textnormal{o}}}))
            \frac{
                \partial \log p(\bm{l}_{1:t_{\textnormal{o}}}\lvert\bm{\theta}, \bm{X}) 
                }{
                    \partial \bm{\theta}
                }
        \right] \\
        & = \mathbb{E}_{\bm{X}, y, t_{\textnormal{o}} \sim p(t_{\textnormal{o}})} 
            \int_{\bm{l}_{1:T}} 
            p(\bm{l}_{1:T}\lvert\bm{\theta}, \bm{X})
            \left[
                \frac{
                        \partial \mathcal{L}(y, q(\bm{\theta}, \bm{X}, \bm{l}_{1:t_{\textnormal{o}}}))
                    }{
                        \partial \bm{\theta}
                    }
            \right. \\
        & \ \ \ \ \ \ \ \ \ \ \ \ \ \ \ \ \ \ \ \ \ 
        + \left.
            \mathcal{L}(y, q(\bm{\theta}, \bm{X}, \bm{l}_{1:t_{\textnormal{o}}}))
            \frac{
                \partial \log p(\bm{l}_{1:t_{\textnormal{o}}}\lvert\bm{\theta}, \bm{X}) 
                }{
                    \partial \bm{\theta}
                }
        \right],
    \end{split}
\end{equation}
\end{small}
where $T$ is the maximum possible value of $t_{\textnormal{o}}$.
Since $t_{\textnormal{o}}$ and $\bm{l}_{1:t_{\textnormal{o}}}$ are mutually independent random variables, we have
\begin{small}
\begin{equation}
    \label{eq:derivatives_of_f_2}
    \begin{split}
        \nabla_{\bm{\theta}}\mathrm{L} & = \mathbb{E}_{\bm{X}, y, \bm{l}_{1:T}} 
            \left[
                \mathbb{E}_{t_{\textnormal{o}} \sim p(t_{\textnormal{o}})}
                \frac{
                        \partial \mathcal{L}(y, q(\bm{\theta}, \bm{X}, \bm{l}_{1:t_{\textnormal{o}}}))
                    }{
                        \partial \bm{\theta}
                    }
            \right. \\
        & \ \ \ \ \ \ \ \ \ \ 
        + \left.
            \mathbb{E}_{t_{\textnormal{o}} \sim p(t_{\textnormal{o}})}
            \mathcal{L}(y, q(\bm{\theta}, \bm{X}, \bm{l}_{1:t_{\textnormal{o}}}))
            \frac{
                \partial \log p(\bm{l}_{1:t_{\textnormal{o}}}\lvert\bm{\theta}, \bm{X}) 
                }{
                    \partial \bm{\theta}
                }
        \right].
    \end{split}
\end{equation}
\end{small}
Moreover, note that $\log p(\bm{l}_{1:t_{\textnormal{o}}}\lvert\bm{\theta}, \bm{X})$ can be factorized as:
\begin{small}
\begin{equation}
    \label{eq:log_factorize}
    \begin{split}
        \log p(\bm{l}_{1:t_{\textnormal{o}}}\lvert\bm{\theta}, \bm{X})
         =& \log p(\bm{l}_{1}\lvert\bm{\theta}, \bm{X}) + \log p(\bm{l}_{2} \lvert \bm{\theta}, \bm{X}, \bm{l}_{1})\\
        & + \cdots + \log p(\bm{l}_{t_{\textnormal{o}}}\lvert\bm{\theta}, \bm{X}, \bm{l}_{1:t_{\textnormal{o}} - 1}),
    \end{split}
\end{equation}
\end{small}
which can be considered as solving the state distribution over a Markov chain. Then, we have:
\begin{small}
\begin{equation}
    \label{eq:log_factorize_v2}
    \begin{split}
        & \mathbb{E}_{t_{\textnormal{o}} \sim p(t_{\textnormal{o}})}
        \mathcal{L}(y, q(\bm{\theta}, \bm{X}, \bm{l}_{1:t_{\textnormal{o}}}))
        \frac{
            \partial \log p(\bm{l}_{1:t_{\textnormal{o}}}\lvert\bm{\theta}, \bm{X}) 
            }{
                \partial \bm{\theta}
            } \\
        = & \sum_{t'=1}^{T}
        \left[
            P(t_{\textnormal{o}}=t')
            \mathcal{L}(y, q(\bm{\theta}, \bm{X}, \bm{l}_{1:t'}))
            \sum_{t=1}^{t'}
            \frac{
                \partial \log p(\bm{l}_{t} \lvert \bm{\theta}, \bm{X}, \bm{l}_{1:(t - 1)})  
                }{
                    \partial \bm{\theta}
                }
        \right]\\
        = & \sum_{t=1}^{T}
        \left[
            \left(
                \sum_{t'=t}^{T} P(t_{\textnormal{o}}=t') \mathcal{L}(y, q(\bm{\theta}, \bm{X}, \bm{l}_{1:t'}))
            \right)
            \frac{
                \partial 
                \log p(\bm{l}_{t} \lvert \bm{\theta}, \bm{X}, \bm{l}_{1:(t - 1)}) 
                }{
                    \partial \bm{\theta}
                }
        \right]
    \end{split}
\end{equation}
\end{small}
Furthermore, combining Eq. (\ref{eq:derivatives_of_f_2}) and Eq. (\ref{eq:log_factorize_v2}), we finally obtain
\begin{small}
\begin{equation}
    \label{eq:final_decomposed_target}
    \begin{split}
         &\nabla_{\bm{\theta}}\mathrm{L} = 
        \underbrace{
        \mathbb{E}_{\bm{X}, y, \bm{l}_{1:T}} 
        \sum_{t=1}^{T}
        P(t_{\textnormal{o}}=t)
        \frac{
                \partial \mathcal{L}(y, q(\bm{\theta}, \bm{X}, \bm{l}_{1:t}))
            }{
                \partial \bm{\theta}
            }
            }_{\textnormal{representation learning objective},\ \nabla_{\bm{\theta}}\mathrm{L}_{\textnormal{rep}}} \\
        & 
        + \underbrace{
            \mathbb{E}_{\bm{X}, y, \bm{l}_{1:T}} 
            \sum_{t=1}^{T}
            \left[
                \left(
                    \sum_{t'=t}^{T} P(t_{\textnormal{o}}=t') \mathcal{L}(y, q(\bm{\theta}, \bm{X}, \bm{l}_{1:t'}))
                \right)
                \frac{
                    \partial 
                    \log p(\bm{l}_{t} \lvert \bm{\theta}, \bm{X}, \bm{l}_{1:(t - 1)}) 
                    }{
                        \partial \bm{\theta}
                    }
            \right]
        }_{\textnormal{self-rewarding reinforcement learning objective},\ \nabla_{\bm{\theta}}\mathrm{L}_{\textnormal{rl}}},
    \end{split}
\end{equation}
\end{small}
which proves Theorem \ref{theorem:1}.
\end{proof}

\noindent \textbf{Proof of Eq. (\ref{eq:rl_target_limit})}.
It is obvious that
\begin{small}
\begin{equation}
    \label{eq:proof_limit_1}
    \begin{split}
        \lim_{\gamma \to 0}\sum\nolimits_{t'=t}^{T}  \gamma^{t'-t} \left(r_{t'} - r_{t'-1}\right)
        &=
        r_{t} - r_{t-1},
        \\
        \lim_{\gamma \to 1}\sum\nolimits_{t'=t}^{T}  \gamma^{t'-t} \left(r_{t'} - r_{t'-1}\right)
        &=
        r_{T} - r_{t-1}.
    \end{split}
\end{equation}
\end{small}
On top of Eq. (\ref{eq:rl_target_1}), we actually have
\begin{small}
\begin{equation}
    \label{eq:proof_limit_2}
    \begin{split}
        &\mathbb{E}_{\bm{l}_{t}} r_{t-1}
        \nabla_{\bm{\theta}}{
            \log p_{\bm{\pi}}(\bm{l}_{t}\lvert\bm{s}_{t-1}) 
            }\\
        = &r_{t-1} \int_{\bm{l}_{t}} 
        p_{\bm{\pi}}(\bm{l}_{t}\lvert\bm{s}_{t-1})
        \frac{1}{p_{\bm{\pi}}(\bm{l}_{t}\lvert\bm{s}_{t-1})}
        \nabla_{\bm{\theta}}{
            p_{\bm{\pi}}(\bm{l}_{t}\lvert\bm{s}_{t-1})
            } \\
        = & r_{t-1}
        \frac{
            \partial \int_{\bm{l}_{t}} p_{\bm{\pi}}(\bm{l}_{t}\lvert\bm{s}_{t-1})
        }{
            \partial \bm{\theta}
        }=0.
    \end{split}
\end{equation}
\end{small}
Eq. (\ref{eq:rl_target_limit}) can be obtained by combining Eq. (\ref{eq:proof_limit_1}) and Eq. (\ref{eq:proof_limit_2}).

\subsection{Evaluation tasks for AdaptiveNN}
\label{sec:intro_tasks}

Here we describe the 9 different tasks used for evaluating AdaptiveNN, each associated with one or more datasets, yielding 17 benchmarks in total. For all tasks, we held out 20\% of training data to perform a hyper-parameter search, and then put this data back to the training set, reporting final results. When involved, we consider the number of floating point operations (FLOPs) as the measure of computational cost for the inference of a model.

\subsubsection{Computer vision tasks}
\label{sec:intro_tasks_cv}

\noindent \textbf{Large-scale real-world visual understanding: ImageNet}.
ImageNet is a large-scale and diverse dataset of high-quality Internet images \cite{russakovsky2015imagenet}. Each image is annotated with a label of its category. The categories are organized according to the WordNet hierarchy \cite{miller1995wordnet}, covering a wide range of common visual content, including objects, buildings, humans, animals, scenes, etc. ImageNet is a very popular benchmark for evaluating deep learning methodologies \cite{he2016deep, huang2017densely, dosovitskiy2021an, orhan2024learning}, and  has been instrumental in advancing computer vision and machine learning research. In this article, we adopted the standard training-validation split, with $\sim$1,280,000 images for training, 50,000 images for validation, and 1,000-class annotations. Following the common practice, we used validation accuracy as the performance metric.
\vskip 0.1in

\noindent \textbf{Fine-grained visual recognition: six benchmarks}.
To examine whether AdaptiveNN has similar capabilities to humans in terms of filtering subtle task-relevant information out of large quantities of noise, we considered six fine-grained visual recognition tasks. These tasks are marked by small inter-class variations (such as distinguishing among visually closely related bird species), and large intra-class variations (such as highly diversified backgrounds and viewpoints). Here we describe the six corresponding datasets we used. For all of them, we adopted the standard training-validation split, and use validation accuracy as the performance metric
\begin{itemize}[itemsep=0pt,topsep=0pt,parsep=0pt]
    \item Caltech-UCSD birds-200-2011 (CUB-200-2011) \cite{wah2011caltech} is one of the most widely-used fine-grained categorization dataset. It consists of 11,788 images of 200 subcategories belonging to birds, 5,994 for training and 5,794 for testing.
    \item North America Birds (NABirds) \cite{van2015building} contains 48,562 annotated photographs of 400 species of commonly observed birds in North America. Each species has more than 100 photographs, including annotations for males, females, and juveniles. All the data is divided into 555 visual categories.
    \item Oxford-IIIT Pet \cite{parkhi2012cats} is a 37 category pet dataset with $\sim$200 images for each class. The images are highly diversified in scale, pose, and lighting.
    \item Stanford Dogs \cite{khosla2011novel} contains 20,580 images of 120 breeds of dogs from around the world. The dataset is divided into 12,000 images for training and 8,580 images for validation.
    \item Stanford Cars \cite{krause20133d} contains 16,185 images of 196 classes of cars. The data is divided into 8,144 training images and 8,041 validation images. The categories are typically built at the level of make, model, and year.
    \item FGVC-Aircraft \cite{maji2013fine} is a benchmark that contains 10,200 images of 102 different classes of aircraft, where each class has 100 images. The data is organized in a four-level hierarchy, namely model, variant, family, and manufacturer.
\end{itemize}
\vskip 0.1in

\noindent \textbf{Efficient processing of visual data from real driving scenarios: STSD}.
Similar to human visual systems, AdaptiveNN is not only able to process relatively object-centric visual data such as ImageNet and fine-grained classification datasets, but also applicable to more general visual perception scenarios. For example, it can process non-object-centric, complex images collected in the wild without specified pre-processing. As a representative example, we considered the task of recognizing traffic signs on the Swedish traffic signs dataset (STSD) \cite{larsson2011using}. The dataset consists of 960$\times$1,280 road-scene images, captured from real moving vehicles, and the task is to recognize the existence and types of the speed limit signs. Note that the targets of interest are generally small, diversely distributed, and sometimes not clear. In this article, we used two subsets comprising 747 and 648 images for training and validation, respectively. Validation accuracy is used as the performance metric.
\vskip 0.1in

\noindent \textbf{Visual search with diversified task demands: localizing arbitrary digits in multi-digit images}.
To investigate whether AdaptiveNN has the human-like adaptability of customizing visual perception behaviors conditioned on different task demands, we considered a visual search scenario where the categories and number of targets are assumed to be flexibly changed. Specifically, we created a digit localization dataset by generating 224$\times$224 images, each randomly populated with 6 to 10 28$\times$28 MNIST digits \cite{lecun1998mnist} against a black background without repetition of digits. We established a large-scale dataset with 500,000 images for training and 50,000 images for validation. To define a visual search task, we specified arbitrary numbers and classes of digits, and trained our model to identify the locations of these specified digits within each input image. This requires a model to not only recognize correct targets, but also accurately localize multiple targets in a single image. To measure the performance of a given model, we randomly defined many visual tasks, and obtained the average success rate on the validation set. Notably, one success means retrieving exactly all the digits demanded by a task from an input, while the success rate of a task is defined as the number of successes divided by the number of all samples.
\vskip 0.1in


\noindent \textbf{Image processing in medical scenarios: RSNA pneumonia detection}.
To demonstrate the efficacy of AdaptiveNN in applications where interpretability holds vital importance, a pneumonia detection scenario was considered. We used the RSNA Pneumonia dataset, which consists of $\sim$30,000 frontal view chest radiographs \cite{shih2019augmenting}. Each image in the dataset is annotated with image-level labels indicating the presence or absence of pneumonia, as well as bounding boxes for pulmonary opacity which are visual signals for the disease. The annotations are provided by 18 board-certified radiologists from 16 institutions. In this article, we leveraged the image-level labels to train AdaptiveNN, and compared the locations it fixates on with the pulmonary opacity identified by clinicians to assess its interpretability. The dataset was randomly divided into training and validation sets, following a ratio of 85\% and 15\%, respectively. The model's diagnostic accuracy was quantified through the area under the receiver operating characteristic curve (AUROC) on the validation data.

\subsubsection{Embodied AI tasks}

\noindent \textbf{CALVIN long-horizon multi-task language control benchmarks}.
We adopt CALVIN \cite{mees2022calvin} to construct the benchmarks for validating the performance of our multi-task, language-guided embodied agent. Within CALVIN, the agent is tasked with executing sequences of actions, each consisting of five subtasks defined through natural language instructions. The model's effectiveness is measured by its average successful length across 1,000 task sequences, with scores ranging from 0 to 5 based on the number of subtasks completed successfully, as detailed in Extended Data Fig. \ref{fig:embodiedAI_details}b. The CALVIN dataset is organized into four distinct environmental subsets, labeled A through D, each characterized by unique visual backgrounds and object arrangements. Each of these subsets encompasses approximately 24,000 robot manipulation trajectories accompanied by language annotations. We train our embodied multimodal large language models on these language-annotated trajectories. To thoroughly evaluate the model's ability to imitate and generalize, we conduct experiments under two scenarios: 1) D$\to$D: training and testing within the same environment, and 2) ABCD$\to$D: training on data from all four environments while testing on a single target domain.

\subsubsection{Comparisons with human visual perception behaviors} \label{sec:intro_tasks_vs_human}
To demonstrate the potential of AdaptiveNN as a valuable tool for investigating human visual cognition, we evaluated humans and AdaptiveNN side by side on the same tests of visual perception behaviors. Specifically, these tests were designed under two goals, namely 1) spatial-wise, examining the locations of visual regions that a human/model fixates on; and 2) sample-wise, examining the difficulty level that a human/model assesses to accomplish the given task based on each individual visual environment. To attain these goals, we conducted three groups of experiments, as described below.

\emph{First}, through the lens of spatial-wise adaptiveness, we investigated the consistency of the locations of visual fixations selected by AdaptiveNN and humans. We employed the saliency in context (SALICON) benchmark \cite{jiang2015salicon}, which consists of 10,000 training images and 5,000 validation images. Every image is annotated with a map of the centers of human gazing. Each gazing center is treated as a single point in the map. The maps of gazing centers were collected based on the paid Amazon Mechanical Turk (AMT) crowdsourcing marketplace, with each image observed by $\sim$60 subjects. All participants had normal or corrected-to-normal vision and normal color vision. The images were presented to each subject in a random order, where each image was presented for 5 seconds. The subjects were instructed to explore the image freely by looking at anywhere they wanted to look, with no further instructions on where they should look in the images. The gazing center locations were obtained by a 100 Hz resampling and processed by excluding the fast-moving data corresponding to saccade processes. 

To compare humans and AdaptiveNN in terms of spatial-wise adaptive visual perception, a metric named `normalized human-like score' was defined. For each image, the average density map of the gazing centers of all $\sim$60 observers was used as the ground truth distribution of the real focal centers of human vision. We let AdaptiveNN select $n$ visual fixation regions on top of each image, mimicking the process of freely observing the image for a fixed length of time. Then, we obtained the probability that the ground truth human gazing centers fall into the visual fixation regions localized by AdaptiveNN, denoted as $p_n^{\textnormal{AdaNN}}$. Similarly, consider sampling visual fixation regions following the gazing center distribution of an arbitrary single person (within $\sim$60 observers), or fully randomly, and then taking the expectations. As a result, we had the corresponding expected probabilities $\mathbb{E}[p_n^{\textnormal{Single-human}}]$ and $\mathbb{E}[p_n^{\textnormal{Random}}]$, respectively. Built upon this, we defined that
\vskip -0.07in

\begin{equation}
    \label{eq:human_like_score}
    \textnormal{normalized human-like score} = \frac{p_n^{\textnormal{AdaNN}} - \mathbb{E}[p_n^{\textnormal{Random}}]}{\mathbb{E}[p_n^{\textnormal{Single-human}}] - \mathbb{E}[p_n^{\textnormal{Random}}]}.
\end{equation}

\vskip 0.03in \noindent
Notably, Eq. (\ref{eq:human_like_score}) = 1 indicates that the consistency between AdaptiveNN and the average characteristics of the spatial-wise visual fixation behaviors of people is approximately the same as the level of an average individual human observer. On the other hand, Eq. (\ref{eq:human_like_score}) = 0 provides a baseline of randomly fixating on the visual environments. In our implementation, normalized human-like scores were calculated upon mini-batches of data sampled from the dataset to reasonably reflect their values across different sets of visual environments. We adopted $n=3$ and a batch size of 64. Moreover, we reported the results on top of the two splits of SALICON (split-1/2 corresponds to the train/val split of SALICON). They were not particularly distinguished since our model had never been trained on SALICON.

\emph{Second}, through the lens of sample-wise adaptiveness, we investigated whether our model is consistent with humans in judging which visual environments are relatively easier or more difficult for a given task, and should be paid less or more attention to observing them. To achieve this, we started by measuring the judgments of difficulty level from humans. Specifically, ten volunteers (aged between 18 and 35) participated in our experiment (we verified that further increasing the number of subjects does not significantly affect our findings). All of them had normal or corrected-to-normal vision and normal color vision. 
Our studies were approved by the THU S\&T Ethics Committee (AI), protocol THU-03-2024-0006, and obtained informed consent.
We selected six representative categories of images from the ImageNet validation set. The participants were instructed to assign a 0-to-100 score to each image according to the difficulty level of the visual recognition task built upon this image, where smaller scores mean easier. The order of different categories and the order of images within each category were both randomized for each participant. Each image was presented to a participant for 5 seconds, after which a corresponding difficulty score was recorded. There was a practice session before formal trials for the participants to get familiar with our experimental paradigm, which was identical to the formal trials in all configurations but the scores were not recorded. After the experiment, the scores of each category were normalized on a per-participant basis and averaged across participants. This human-assessed difficulty level was compared with the normalized state values predicted by AdaptiveNN, which reflect our model's judgments on the difficulty level of each visual environment.

\emph{Third}, we further developed several `visual Turing tests' \cite{lake2015human}, leveraging the straightforward human judgments to compare the visual perception behaviors of AdaptiveNN with those of humans. In these tests, real human judges tried to identify the machine, given paired examples of human and machine behaviors. Driven by the previous discussions, our `visual Turing tests' probed into both the spatial-wise visual fixation behaviors and the sample-wise visual difficulty-assessment behaviors of our model. For the former, we took the ground truth density map of human gazing centers for each image from SALICON, and sampled a sequence of three visual fixation regions, as human behaviors against the machine behaviors of the three fixations selected by AdaptiveNN. For the latter, the normalized and averaged human-assessed difficulty scores acquired as aforementioned, and the normalized state values predicted by AdaptiveNN, were rescaled to [0, 100] on a per-class basis, as human and machine behaviors, respectively. In each trial, a human judge was given two paired groups of images (three in each) in a random order, one group comprising human behaviors and the other comprising machine behaviors. The human judge was informed to identify `which group of images reflect the visual perception behaviors of a machine'. See Supplementary Data Fig. \textcolor{blue}{1-2} for the representative examples of our trials.

The full procedure of `visual Turing tests' is deatailed in Extended Data Fig. \ref{fig:visual_turing_details}a.
For each `visual Turing test' concerning the spatial-wise or sample-wise adaptive visual perception behaviors, we considered three types of trials: i) human v.s. machine, as described above; ii) human v.s. human; and iii) human v.s. random. For each trial of ii) and iii), we replaced the group of images corresponding to `machine' with samples depicting the behaviors of human vision or randomly generated behaviors, yet the participant was still told to distinguish between human v.s. machine. We established 36 trials for each of i)-iii), yielding totally 108 trials for each of the two `visual Turing tests'. These 108 trials were shuffled for every participant, such that ii) and iii) provided randomized control groups as baselines for comparison, and also offered information to validate whether our experimental setups were reasonable. 39 volunteers (aged between 18 and 40), with normal or corrected-to-normal vision and normal color vision, participated in our experiment. Our studies were approved by the THU S\&T Ethics Committee (AI), protocol THU-03-2024-0006, and obtained informed consent. We verified that further increasing the number of subjects does not significantly affect our findings.
There was a practice session before real trials. After all trials, each accuracy of i)-iii) was calculated per participant and aggregated across participants. Notably, 50\% accuracy indicates that the sort of behaviors is indistinguishable from those of humans (perfectly human-like), while 100\% suggests the inverse case.
\vskip 0.1in

\subsection{Implementation details of AdaptiveNN}
\label{sec:imple_detail}

In this section, we describe the implementation details of our method, including its inference procedure, network architectures, and training algorithms. These materials are organized for ease of understanding.

\subsubsection{Inference procedure}
\label{sec:imple_detail_inference}

\noindent \textbf{Termination criteria for the sequential perception process}.
The values of $\{\eta_1, \eta_2, \cdots\}$ reflect whether the overall quantity of available resources for visual perception, such as computation, time, or energy, is sufficient in the current circumstance. When $\{\eta_1, \eta_2, \cdots\}$ are large, AdaptiveNN generally tends to leverage relatively fewer fixations to observe various visual environments. Conversely, their small values indicate that our model can employ more fixations for visual processing on average. Notably, this only corresponds to the overall situation. In both cases, AdaptiveNN can efficiently perform uneven computation allocation across different visual environments by utilizing the predicated state values of \emph{Vision Agent}.

Incorporating these considerations of the effectiveness-efficiency trade-off, we argue that the values of $\{\eta_1, \eta_2, \cdots\}$ should be determined through maximizing the performance of AdaptiveNN under a fixed amount of total cost. Specifically, consider a set of visual environments $\mathcal{D}$ and the metrics of performance and costs, $\mathcal{P}(\cdot)$ and $\mathcal{C}(\cdot)$, which are defined with respect to $\mathcal{D}$, $\{\eta_1, \eta_2, \cdots\}$ and an AdaptiveNN model parameterized by $\bm{\theta}$. Given a budget $B>0$, $\{\eta_1, \eta_2, \cdots\}$ can be obtained by solving the optimization problem:
\vskip -0.07in

\begin{equation}
    \label{eq:solve_thres}
    \mathop{\textnormal{maximize}}_{\eta_1, \eta_2, \cdots}\ \  \mathcal{P}(\bm{\theta}, \mathcal{D}, \{\eta_1, \eta_2, \cdots\}),\quad 
    \textnormal{subject to}\ \ \mathcal{C}(\bm{\theta}, \mathcal{D}, \{\eta_1, \eta_2, \cdots\}) \leq B.
\end{equation}

\vskip 0.03in \noindent
Notably, by considering a series of varied $B$, we can collect a group of different thresholds $\{\eta_1, \eta_2, \cdots\}$ associated with the model $\bm{\theta}$. As a consequence, the cost of AdaptiveNN can be flexibly adjusted online without additional training by simply adjusting these thresholds. In our implementation, we instantiate $\mathcal{D}$, $\mathcal{P}(\cdot)$, and $\mathcal{C}(\cdot)$, as the training set of vision tasks, the accuracy, area under curve (AUC) score, or negative expected mean squared error, and the amount of computation for inference. However, the flexibility of our formulation allows these definitions to adapt to more diversified demands of various tasks, such as introducing task-specific performance metrics as $\mathcal{P}(\cdot)$, or leveraging the latency or energy consumption on given hardware devices as $\mathcal{C}(\cdot)$. Besides, we solve problem (\ref{eq:solve_thres}) using the genetic algorithm \cite{wang2021not}.

\subsubsection{Network architecture for computer vision tasks}
\label{sec:imple_detail_arch}

\textbf{Perception networks}.
In AdaptiveNN, the perception net $f_{\textnormal{rep}}$ is formulated as feature extractors with flexible architectures. In general, most existing deep learning backbones can be deployed as them. In our implementation, we mainly consider two representative examples, ResNet \cite{he2016deep} and DeiT \cite{touvron2021training}. ResNet processes input images with the alternatively stacked convolutional blocks and pooling layers, while DeiT splits images into 2D patches, each of which is embedded into a token and processed through multi-head self-attention layers and multilayer perceptron \cite{vaswani2017attention}. Both of them leverage residual connections \cite{he2016deep}. These architectures represent a wide range of popular modern deep networks for extracting embeddings from visual data. 
In visual recognition and medical diagnosis tasks, we adopt the first three network stages of ResNet or the first eight blocks of DeiT for the initial processing of the down-sampled glance inputs. We employ another full ResNet/DeiT network for processing visual fixations, due to its markedly different scales from the glance inputs. 
The internal vision representation of AdaptiveNN is fed into a task-specific head whose architecture adopts the final stage of ResNet or four DeiT blocks, in each corresponding scenario. 
In the visual search scenario, for fair comparisons with the baseline methods \cite{mnih2014recurrent, ba2014multiple}, the two perception nets consist of two and three convolutional layers, respectively, while the task-specific head is a multilayer perceptron.
\vskip 0.1in

\noindent \textbf{Vision agent}.
The \emph{Vision Agent} is defined on top of the internal vision representation $\bm{s}_t$ with the size $C\!\times\!H_{\textnormal{f}}\!\times\!W_{\textnormal{f}}$. It is formulated as the combination of a policy network $\bm{\pi}$ and a value network $V^{\bm{\pi}}$. The architecture of $V^{\bm{\pi}}$ and $\bm{\pi}$ is both a sequential composition of a $C\!\to\!C$ depth-wise convolutional layer with $3\!\times\!3$ kernels, a $C\!\to\!128$ dense convolutional layer with $1\!\times\!1$ kernels, a feature-flatten layer, and a multilayer perceptron with corresponding different output neurons. A Gaussian error linear unit (GELU) \cite{dosovitskiy2021an} is added after each intermediate convolutional or linear layer for introducing nonlinearity. The output of $V^{\bm{\pi}}$ is a scalar $V^{\bm{\pi}}(\bm{s}_t)$. The outputs of $\bm{\pi}$ parameterize a distribution $p_{\bm{\pi}}(\cdot\lvert\bm{s}_{t})$, from which we sample the location $\bm{l}_{t+1}$ of the ${(t+1)}^{\textnormal{th}}$ visual fixation $\bm{l}_{t+1}$. During training, we consider $p_{\bm{\pi}}(\cdot\lvert\bm{s}_{t})$ as a Gaussian distribution, whose mean is output by $\bm{\pi}$ and standard deviation is pre-defined as a hyperparameter. At test time, $p_{\bm{\pi}}(\cdot\lvert\bm{s}_{t})$ is set to be a Dirac delta distribution centered at the outputs of $\bm{\pi}$ for a deterministic inference process.
\vskip 0.1in

\noindent \textbf{Feature updating and reusing}.
As aforementioned, the perception network $f_{\textnormal{rep}}$ is activated on top of the $P\!\times\!P$ visual fixation $\bm{l}_{t}$. This yields local features $\bm{s}^{\textnormal{local}}_t=f_{\textnormal{rep}}(\bm{l}_{t})\!\in\!\mathbb{R}^{C\!\times\!P_{\textnormal{f}}\!\times\!P_{\textnormal{f}}}$, which is employed to update the internal vision representation $\bm{s}_{t-1}$ to obtain $\bm{s}_{t}$. For simplicity, assume that $\tilde{\bm{s}}^{\textnormal{local}}_t\!\in\!\mathbb{R}^{C\!\times\!P_{\textnormal{f}}^2}$ and $\tilde{\bm{s}}_{t-1}\!\in\!\mathbb{R}^{C\!\times\!H_{\textnormal{f}}W_{\textnormal{f}}}$ denote the flattened versions of $\bm{s}^{\textnormal{local}}_t$ and $\bm{s}_{t-1}$, respectively. Then the updating operation $\bm{s}_t = \Psi(\bm{s}_{t-1}, f_{\textnormal{rep}}(\bm{l}_{t}))$ can be expressed as
\vskip -0.07in

\begin{equation}
    \label{eq:update_s}
    \tilde{\bm{s}}_{t} = \tilde{\bm{s}}_{t-1} + \tilde{\bm{s}}^{\textnormal{local}}_t \cdot \bm{\mathrm{W}}, \quad \bm{\mathrm{W}} \in \mathbb{R}^{P_{\textnormal{f}}^2 \times H_{\textnormal{f}}W_{\textnormal{f}}}.
\end{equation}

\vskip 0.03in \noindent
We find that this simple rule is able to work reasonably well in various scenarios, where it is combined with the normalization layers and nonlinear blocks introduced by other components of AdaptiveNN. 

To construct the transformation matrix $\bm{\mathrm{W}}$, we mainly consider two principles: spatial-wise correlations and semantic-level feature importance. For the former, we constrain that the features within $\bm{s}^{\textnormal{local}}_t$ can only be utilized to update the features in $\bm{s}_{t-1}$ that are spatially close to them. Specifically, given the feature located at $i^{\textnormal{th}}$ row and $j^{\textnormal{th}}$ column of $\bm{s}^{\textnormal{local}}_t$, we can always find its corresponding coordinates $(x_{ij}, y_{ij})$ at $\bm{s}_{t-1}$, since $\bm{s}_{t-1}$ is the overall representation of the visual environment $\bm{X}$ while $\bm{s}^{\textnormal{local}}_t$ is the embeddings extracted from a region of $\bm{X}$. Suppose that $(i', j')$ denotes the coordinates of a feature in $\bm{s}_{t-1}$.  We let
\vskip -0.07in

\begin{equation}
    \label{eq:W_spatial_correlation}
    \bm{\mathrm{W}}_{(i-1)P_{\textnormal{f}} + j, (i'-1)W_{\textnormal{f}}+j'} = 0, \quad \neg (\lvert x_{ij}-i' \lvert \leq n^{\textnormal{update}}\land\lvert y_{ij}-j' \lvert \leq n^{\textnormal{update}}).
\end{equation}

\vskip 0.03in \noindent
By examining all possible values of $i, i', j, j'$, Eq. (\ref{eq:W_spatial_correlation}) ensures that each feature vector within $\bm{s}^{\textnormal{local}}_t$ can only have effects on its $(2n^{\textnormal{update}}+1)\!\times\!(2n^{\textnormal{update}}+1)$ surrounding features in $\bm{s}_{t-1}$, and hence introduces the constraints of spatial-wise correlations to Eq. (\ref{eq:update_s}). In our implementation, we simply fix $n^{\textnormal{update}}=2$. For the nonzero elements of $\bm{\mathrm{W}}$, we fill them with feature-conditional weights, aiming to model the diverse semantic-level importance of different features. We take the $k^{\textnormal{th}}$ column of $\tilde{\bm{s}}^{\textnormal{local}}_t$, $(\tilde{\bm{s}}^{\textnormal{local}}_t)_{:, k}$, and feed it into a multilayer perceptron to obtain a weight matrix $\bm{\mathrm{v}}^{k}$:
\vskip -0.07in

\begin{equation}
    \begin{split}
    \tilde{\bm{\mathrm{v}}}^{k} &= \textnormal{MLP}\left(
        (\tilde{\bm{s}}^{\textnormal{local}}_t)_{:, k}
    \right)\in\mathbb{R}^{(2n^{\textnormal{update}}+1)^2},
    \\
    \bm{\mathrm{v}}^{k} &= \textnormal{reshape}(\tilde{\bm{\mathrm{v}}}^{k})
    \in\mathbb{R}^{(2n^{\textnormal{update}}+1)\!\times\!(2n^{\textnormal{update}}+1)}.
    \end{split}
\end{equation}

\vskip 0.03in \noindent
Then, on top of Eq. (\ref{eq:W_spatial_correlation}), we further define
\vskip -0.07in

\begin{equation}
    \label{eq:semantic_importance}
    \begin{split}
    \bm{\mathrm{W}}_{(i-1)P_{\textnormal{f}} + j, (i'-1)W_{\textnormal{f}}+j'} &= \bm{\mathrm{v}}^{(i-1)P_{\textnormal{f}} + j}_{ 
    \lfloor x_{ij}-i' \rceil + n^{\textnormal{update}}+1, \lfloor y_{ij}-j' \rceil + n^{\textnormal{update}}+1}, \\
    \lvert x_{ij}-i' \lvert \leq n^{\textnormal{update}} &\land \lvert y_{ij}-j' \lvert \leq n^{\textnormal{update}}.
    \end{split}
\end{equation}

\vskip 0.03in \noindent
Combining Eq. (\ref{eq:W_spatial_correlation}) and Eq. (\ref{eq:semantic_importance}), the final form of $\bm{\mathrm{W}}$ updates $\bm{s}_{t-1}$ using the features from ${\bm{s}}^{\textnormal{local}}_t$ corresponding to the neighboring image regions of each element of $\bm{s}_{t-1}$, modeling the inherent spatial continuity of visual data. In addition, the intensity of this updating operation is flexible as it is learnable on top of each specific feature vector in ${\bm{s}}^{\textnormal{local}}_t$.

Furthermore, built upon a similar idea of leveraging spatial-wise correlations, we note that before processing the next visual fixation $\bm{l}_{t+1}$, some information pertinent to $\bm{l}_{t+1}$ have already been incorporated into the corresponding locations of ${\bm{s}}_t$, \emph{e.g.}, by previous steps of sequential perception. Therefore, we propose to accomplish the feature-extracting process of $\bm{s}^{\textnormal{local}}_{t+1}=f_{\textnormal{rep}}(\bm{l}_{t+1})$ on the basis of reusing the existing relevant information in ${\bm{s}}_t$, rather than fully from scratch. To implement this idea, we take the features from ${\bm{s}}_t$ following the same relative sizes and locations as these of $\bm{l}_{t+1}$ with respect to $\bm{X}$. Then, we feed the features into a multilayer perceptron and add the outputs to the tokens of $\bm{l}_{t+1}$ after the input layer of $f_{\textnormal{rep}}$ as learnable context embeddings. Interestingly, this design is not limited to a technique that facilitates the efficient reuse of computation. As a matter of fact, it is inspired by modeling the presaccadic attention in human vision -- the phenomenon of automatically deploying attention to the upcoming fixation location before the eyes start to move, improving visual sensitivity at the saccade target at the price of lowered perceptual sensitivity at other (non-target) locations \cite{li2021different, hanning2023dissociable}.

\subsubsection{Network architecture for embodied AI tasks}
\label{sec:imple_detail_arch_embodiedAI}

The architecture of our embodied multimodal large language models mainly follows RoboFlamingo \cite{li2024visionlanguage}. A pre-trained OpenFLamingo 3B \cite{awadalla2023openflamingo} is utilized as the backbone network. As detailed in Extended Data Fig. \ref{fig:embodiedAI_details}a, each two adjacent network blocks coupled with the shared vision encoder are employed as the perception net of AdaptiveNN. The visual tokens from both fixation and glance inputs, along with language tokens, are fed into the layers of the large language model (LLM) to extract a joint vision-language representation for the robot tasks.
The robotic policy head adopts an LSTM network followed by a multilayer perceptron \cite{li2024visionlanguage}. The architecture of the policy network and value network of AdaptiveNN's vision agent is both a multilayer perceptron.

\subsubsection{Training algorithm}
\label{sec:imple_detail_training}

\textbf{Representation learning for computer vision tasks}.
Here we describe how we carry out the representation learning of AdaptiveNN ($\mathrm{L}_{\textnormal{rep}}$ in Eq. (\ref{eq:theorem_1})) in computer vision tasks. Without loss of generality, we assume that $t_{\textnormal{o}} \sim p(t_{\textnormal{o}})$ follows a uniform distribution. Note that we can also consider more complex distributions to develop algorithms tailored for the specified demands of application scenarios, which may contribute to further performance improvements. However, we find that simple uniform distribution can already work reasonably well across various scenarios, and the results are sufficient to support our findings. Therefore, in this work, we always adopt $t_{\textnormal{o}} \sim \textnormal{unif}\{1,T\}$. This also demonstrates that our model's effectiveness does not rely on strong assumptions relevant to $p(t_{\textnormal{o}})$.

In our implementation, we augment $\mathrm{L}_{\textnormal{rep}}$ with two advanced representation learning techniques. The final loss to minimize can be written as: 
\vskip -0.07in

\begin{equation}
    \label{eq:final_rep_loss}
    \mathrm{L}_{\textnormal{rep}} + \alpha\mathcal{L}_{\textnormal{regularization}}(y, f_{\textnormal{rep}}(\bm{X}_{\textnormal{d}})) + \beta\sum_{t=1}^{T-1}\mathcal{L}_{\textnormal{self-distillation}}(q_{T}, q_{t}),
\end{equation}

\vskip 0.03in \noindent
where $\alpha, \beta$ are coefficients, and we simply fix $\alpha=2, \beta = 1$ in this article. The second term $\mathcal{L}_{\textnormal{regularization}}$ in Eq. (\ref{eq:final_rep_loss}) is a regularization loss for the perception network $f_{\textnormal{rep}}$ that processes visual fixations \cite{wang2022adafocus}. We feed the down-sampled version $\bm{X}_{\textnormal{d}}$ of $\bm{X}$ into $f_{\textnormal{rep}}$ and obtain a direct loss using its outputs and the label $y$. This regularization technique addresses the slow convergence issue of $f_{\textnormal{rep}}$ caused by only seeing visual fixations $\{\bm{l}_{1}, \bm{l}_{2}, \ldots\}$ during training, at the price of slightly increasing the train-test discrepancy of the inputs of $f_{\textnormal{rep}}$.
The third term $\mathcal{L}_{\textnormal{self-distillation}}$ in Eq. (\ref{eq:final_rep_loss}) corresponds to a self-distillation technique \cite{han2023dynamic}, where $q_{t}=q(\bm{\theta}, \bm{X}, \bm{l}_{1:t})$ denotes the outputs of the model at $t^{\textnormal{th}}$ step. This technique leverages the final outputs $q_{T}$ of AdaptiveNN to guide the learning of intermediate outputs $q_{1}, \ldots, q_{T-1}$. It improves the performance of AdaptiveNN equipped with a smaller number of fixations, while only introducing negligible additional training costs.
\vskip 0.1in

\noindent \textbf{Reinforcement learning for computer vision tasks}.
Similar to representation learning, we assume $t_{\textnormal{o}} \sim \textnormal{unif}\{1,T\}$. The off-the-shelf proximal policy optimization algorithm (PPO) \cite{schulman2017proximal} using generalized advantage estimation \cite{Schulmanetal_ICLR2016} is deployed to accomplish the reinforcement learning procedure. 
\vskip 0.1in

\noindent \textbf{Training algorithms for embodied AI tasks}.
The training of the embodied multimodal large language models basically follows RoboFlamingo \cite{li2024visionlanguage}. For AdaptiveNN, we simply adopt Eq. (\ref{eq:theorem_1}) as the training objective, which works reasonably well. Other implementation details are the same as computer vision tasks.
\vskip 0.1in

\subsection*{Data availability}
Most data used in this study are publicly available, including ImageNet \cite{russakovsky2015imagenet} (\url{https://www.image-net.org/}), CUB-200-2011 \cite{wah2011caltech} (\url{https://www.vision.caltech.edu/datasets/cub_200_2011/}), NABirds \cite{van2015building} (\url{https://dl.allaboutbirds.org/nabirds}), Oxford-IIIT Pet \cite{parkhi2012cats} (\url{https://www.robots.ox.ac.uk/~vgg/data/pets/}), Stanford Dogs \cite{khosla2011novel} (\url{https://paperswithcode.com/dataset/stanford-dogs}), Stanford Cars \cite{krause20133d} (\url{https://paperswithcode.com/dataset/stanford-cars}), FGVC-Aircraft \cite{maji2013fine} (\url{https://www.robots.ox.ac.uk/~vgg/data/fgvc-aircraft/}), STSD \cite{larsson2011using} (\url{https://www.cvl.isy.liu.se/research/datasets/traffic-signs-dataset/}), MNIST \cite{lecun1998mnist} (\url{https://paperswithcode.com/dataset/mnist}), RSNA pneumonia detection \cite{shih2019augmenting} (\url{https://www.rsna.org/rsnai/ai-image-challenge/rsna-pneumonia-detection-challenge-2018}), CALVIN \cite{mees2022calvin} (\url{https://github.com/mees/calvin}), SALICON \cite{jiang2015salicon} (\url{http://salicon.net}), and MIT1003 \cite{mit1003} (\url{https://saliency.tuebingen.ai/}). 
A minimum dataset for our `visual Turing tests' is provided in Supplementary Fig. \textcolor{blue}{12-13}.

\subsection*{Code availability}
Implementation code is available at \url{https://github.com/LeapLabTHU/AdaptiveNN} \cite{yang_yue_le_yang_2025_16810996}.


\subsection*{Acknowledgements}
G.H. is supported by the National Key R\&D Program of China under Grant 2024YFB4708200, the National Natural Science Foundation of China under Grants U24B20173 and 62276150, and the Scientific Research Innovation Capability Support Project for Young Faculty under Grant ZYGXQNJSKYCXNLZCXM-I20.
S.S. is supported by the National Natural Science Foundation of China under Grant  42327901.
We thank S. Zhang, M. Yao and Y. Wu for helpful discussions and comments on an earlier version of this paper. 

\subsection*{Author contributions}
G.H. and S.S. initiated and supervised the project. 
Y.W., Y.Y. (le-y22@mails.tsinghua.edu.cn), Y.Y. (yueyang22@mails.tsinghua.edu.cn) and G.H. contributed to the conception and design of the work.
Y.W., Y.Y. (le-y22@mails.tsinghua.edu.cn), Y.Y. (yueyang22@mails.tsinghua.edu.cn), H.W., H.J., Y.H. and Z.N. contributed to the technical implementation.
Y.W., Y.Y. (le-y22@mails.tsinghua.edu.cn), Y.P., M.S., R.L. and Q.Y. contributed to the data acquisition and organization.
Y.W., Y.Y. (le-y22@mails.tsinghua.edu.cn), Y.Y. (yueyang22@mails.tsinghua.edu.cn), H.W., A.Z. and Z.X. analyzed the results.
All authors contributed to the drafting and revising of the manuscript.

\subsection*{Competing interests}
The authors declare no competing interests.

\makeatletter
\let\ICML@old@fnum@figure\fnum@figure
\let\ICML@old@makecaption\@makecaption
\renewcommand{\fnum@figure}{Extended Data Fig.\ \thefigure}

\setcounter{figure}{0}
\renewcommand{\thefigure}{A\arabic{figure}}
\renewcommand{\theHfigure}{A\arabic{figure}}

\crefname{figure}{Extended Data Fig.}{Extended Data Figs.}
\Crefname{figure}{Extended Data Fig.}{Extended Data Figs.}
\renewcommand{\figureautorefname}{Extended Data Fig.}

\clearpage

\begin{figure*}[!h]
    \centering
    \vskip 0.6in
        \includegraphics[width=1\textwidth]{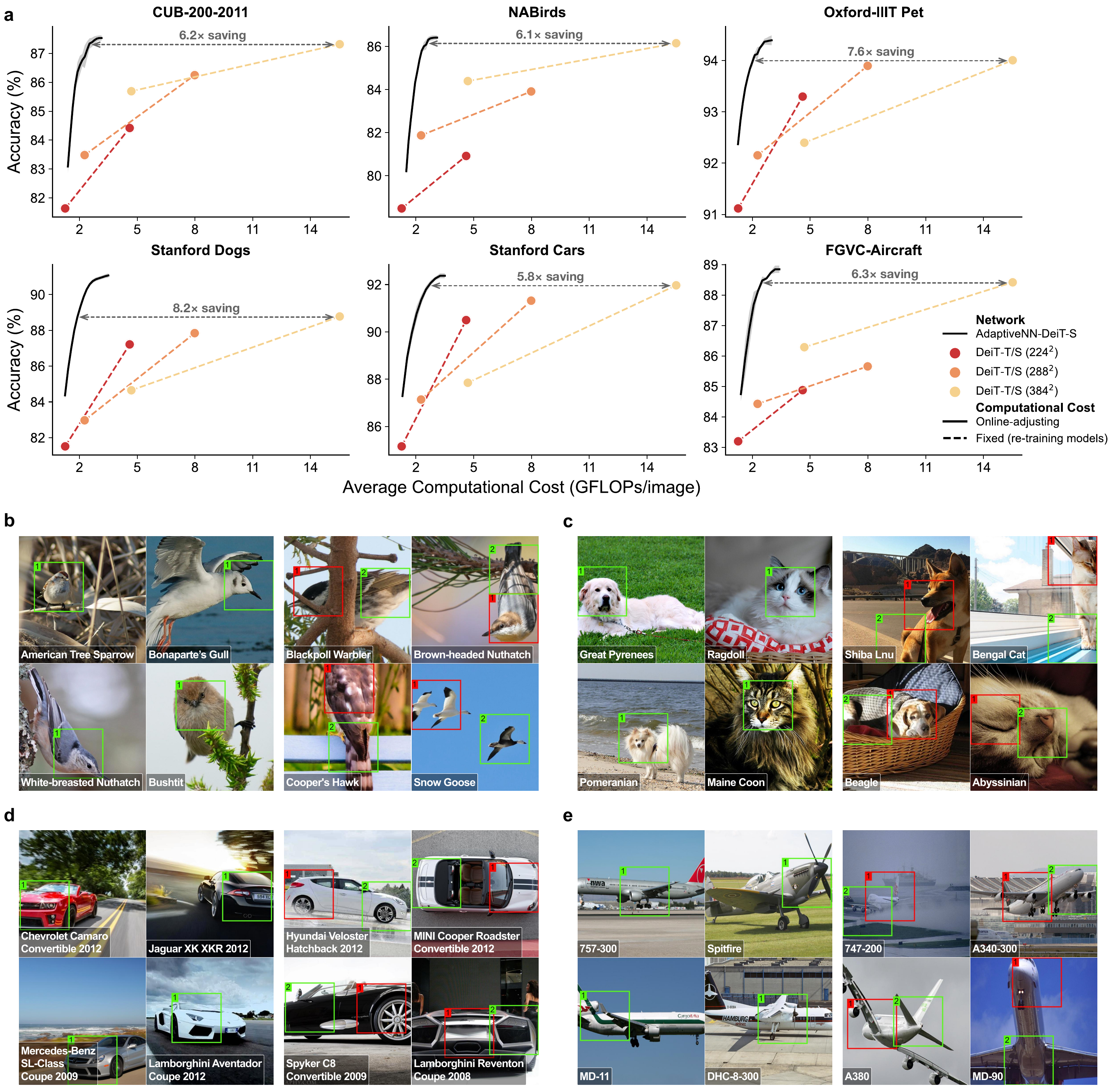}
    \caption{
        \textbf{Results on six fine-grained visual recognition benchmarks.}
        \textbf{(a)} Quantitative comparisons of AdaptiveNN and conventional non-adaptive models: Top-1 validation accuracy versus average computational cost for inferring the model. Datasets: CUB-200-2011 \cite{wah2011caltech}, NABirds \cite{van2015building}, Oxford-IIIT Pet \cite{parkhi2012cats}, Stanford Dogs \cite{khosla2011novel}, Stanford Cars \cite{krause20133d}, FGVC-Aircraft \cite{maji2013fine}. Error bars show the standard deviations of five independent trials with different random seeds. Non-adaptive models with varying costs are obtained by modifying model sizes and input resolutions. Here we set the maximum fixation number to be two, which is generally sufficient to accomplish the recognition tasks.
        \textbf{(b-e)} Qualitative evaluation of the visual fixations chosen by AdaptiveNN-DeiT-S across four datasets: CUB-200-2011, Oxford-IIIT Pet, Stanford Cars, and FGVC-Aircraft. The visualizations adhere to the setups established in Fig. \ref{fig:fig2}a.
        \label{fig:fig3}
        }
    \vskip -0.2in
\end{figure*}

\begin{figure*}[!h]
    \centering
    \includegraphics[width=1\textwidth]{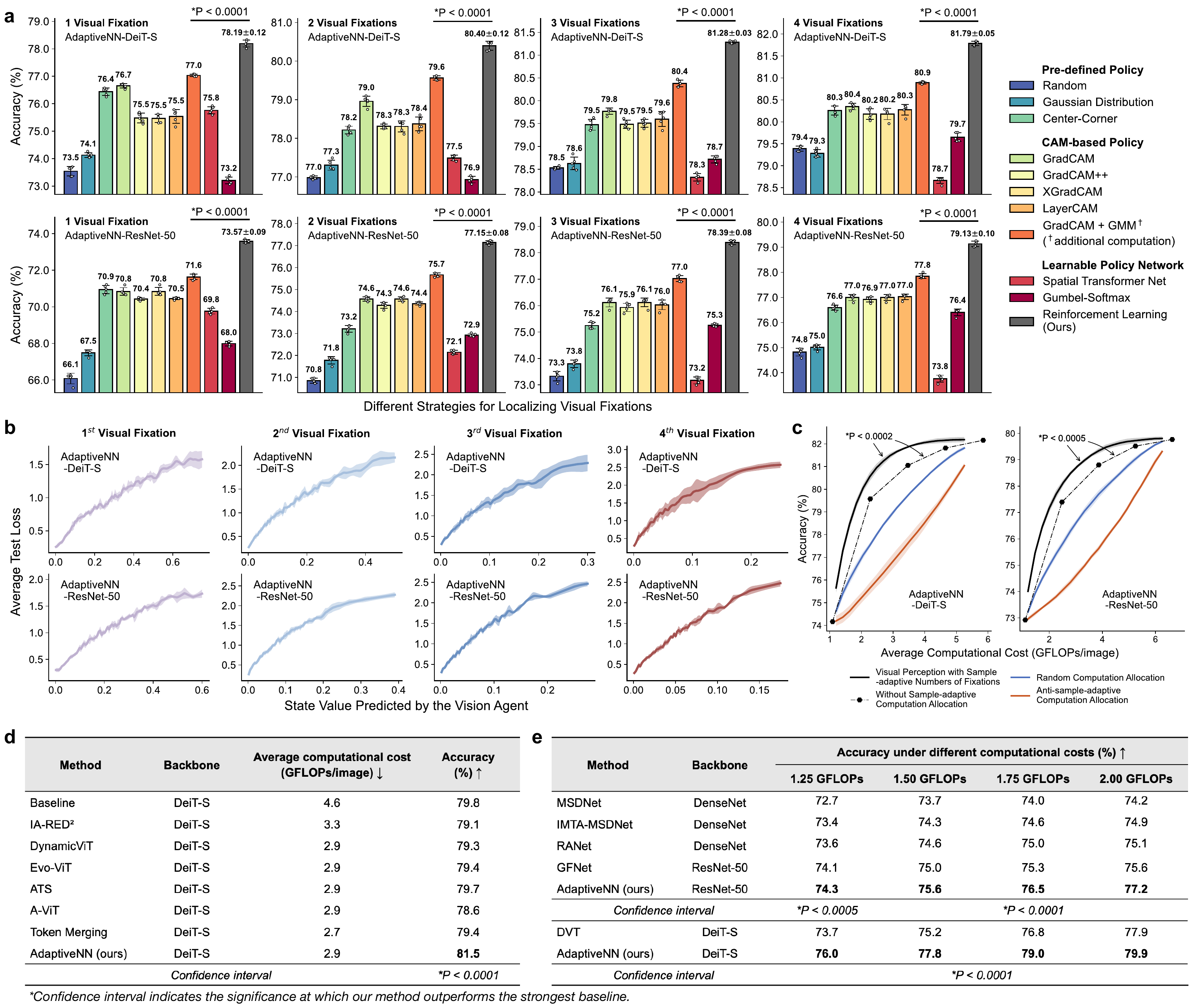}
    \caption{
        \textbf{Investigation and ablation studies of the design principles of AdaptiveNN.} 
        All the results are reported on ImageNet. See Supplementary Section \textcolor{blue}{B} for the details of comparative baselines.
        \textbf{(a)} Efficacy of different methodologies for establishing the fixation localization strategy within AdaptiveNN. For a clean comparison, we train a classifier using only the features from visual fixations, and assume all samples use the same number of fixations, such that the resulting validation accuracy serves as a well-controlled measure to assess the effectiveness of each variant. 
        Moreover, we consider an extensive variety of baselines for comparison, including selecting fixations using i) pre-defined rules; ii) class activation maps (CAMs); iii) CAMs augmented with a Gaussian mixture model (GMM); and iv) policy networks learned using other algorithms.
        \textbf{(b)} Average test loss corresponding to the validation data with different state values predicted by the \emph{Vision Agent} in AdaptiveNN. We examine the state values taken from every step of sequential perception processes.
        \textbf{(c)} Comparisons of different termination criteria for concluding the sequential perception process of AdaptiveNN. The term `anti-' refers to the inverse of our proposed method (detailed in Section \ref{sec:imple_detail_inference}), namely terminating the observation process for samples with relatively higher state values.
        \textbf{(d-e)} Comparisons with representative methodologies designed to improve deep learning models' computational efficiency. Specifically, \textbf{(d)} evaluates against baselines that leverage spatial redundancy in visual data. \textbf{(e)} examines models with multi-exit architectures that allow for online computational cost adjustments.
        *Independent samples $t$-test. Error bars show the standard deviations of five independent trials with different random seeds.
        \label{fig:fig6}
        }
\end{figure*}

\begin{figure*}[!h]
    \centering
        \includegraphics[width=\textwidth]{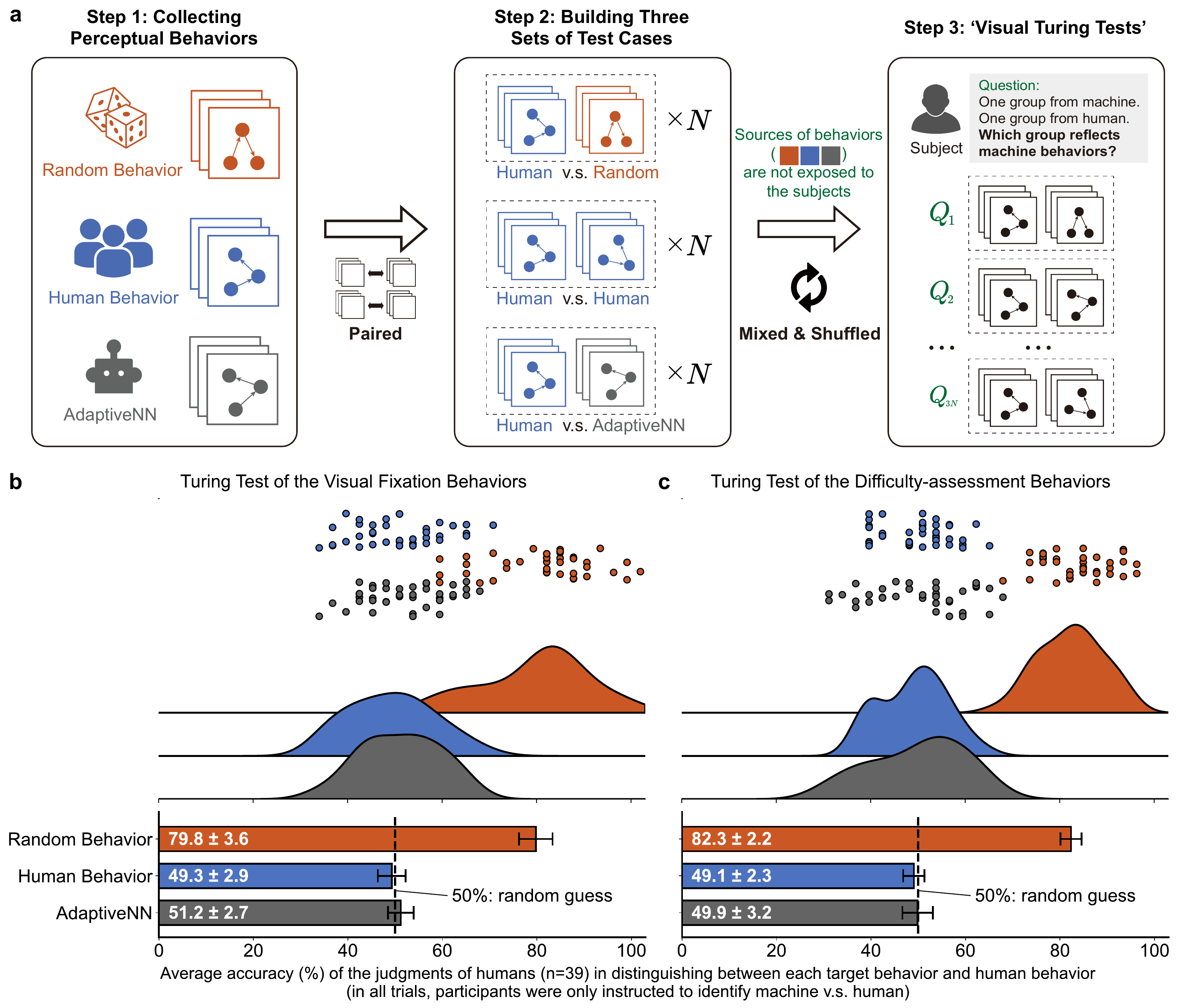}
    \caption{
        \textbf{Details of `visual Turing tests'.}
        \textbf{(a)} The full procedure of `visual Turing tests'. We first collect the visual perception behaviors from real humans, machine (AdaptiveNN), and random generation. Then, we construct multiple trials, each including paired examples of perceptual behaviors. We consider three types of trials: i) human v.s. machine; ii) human v.s. human; and iii) human v.s. random, each corresponding to $N$ trials (we use $N$=36), yielding totally $3N$ trials for each `visual Turing test'. Finally, these $3N$ trials are mixed and shuffled for every human judge ($n$=39). The participants are only instructed to identify the machine behaviors within each trial (for all i)-iii)). Each accuracy of i)-iii) is calculated per participant and aggregated across participants. As a result, i) offers the Turing test results, while ii) and iii) provide randomized control groups as baselines and also validate whether our experimental setups are reasonable.
        \textbf{(b)} Results of the two `visual Turing tests'. Each data point represents the average identification accuracy of a human judge. 
        Bars show the mean accuracy across human judges and the corresponding 95\% confidence interval. Ideal performance is 50\%, where the machine is indistinguishable from human behaviors in these binary choice tasks.
        \label{fig:visual_turing_details}
        }
\end{figure*}

\begin{figure*}[!h]
    \centering
        \includegraphics[width=\textwidth]{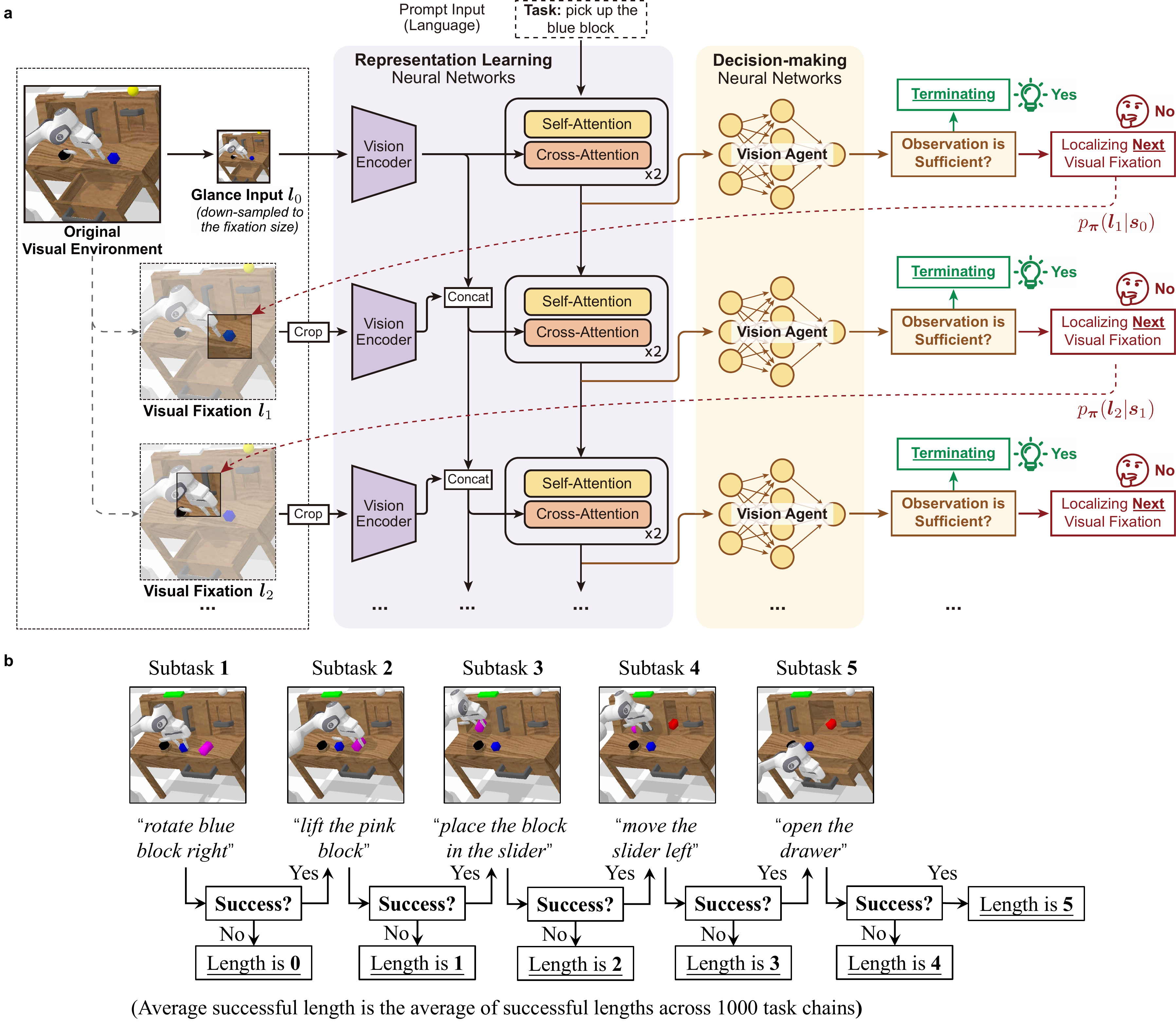}
    \caption{
        \textbf{Details of the experiments based on embodied multimodal large language models (MLLM).}
        \textbf{(a)} The network architecture and inference procedure of the AdaptiveNN-based embodied MLLM, which mainly follows RoboFlamingo \cite{li2024visionlanguage}. The backbone network is based on a pre-trained OpenFLamingo 3B \cite{awadalla2023openflamingo}. Each two adjacent network blocks coupled with the shared vision encoder are employed as the perception net of AdaptiveNN.
        \textbf{(b)} The metric employed in our experiments on CALVIN. The model performance is quantified as the average successful length (0 to 5) across 1000 5-task sequences.
        \label{fig:embodiedAI_details}
        }
\end{figure*}

\clearpage

\bibliography{main}
\bibliographystyle{unsrt}

\clearpage

\includepdfmerge{./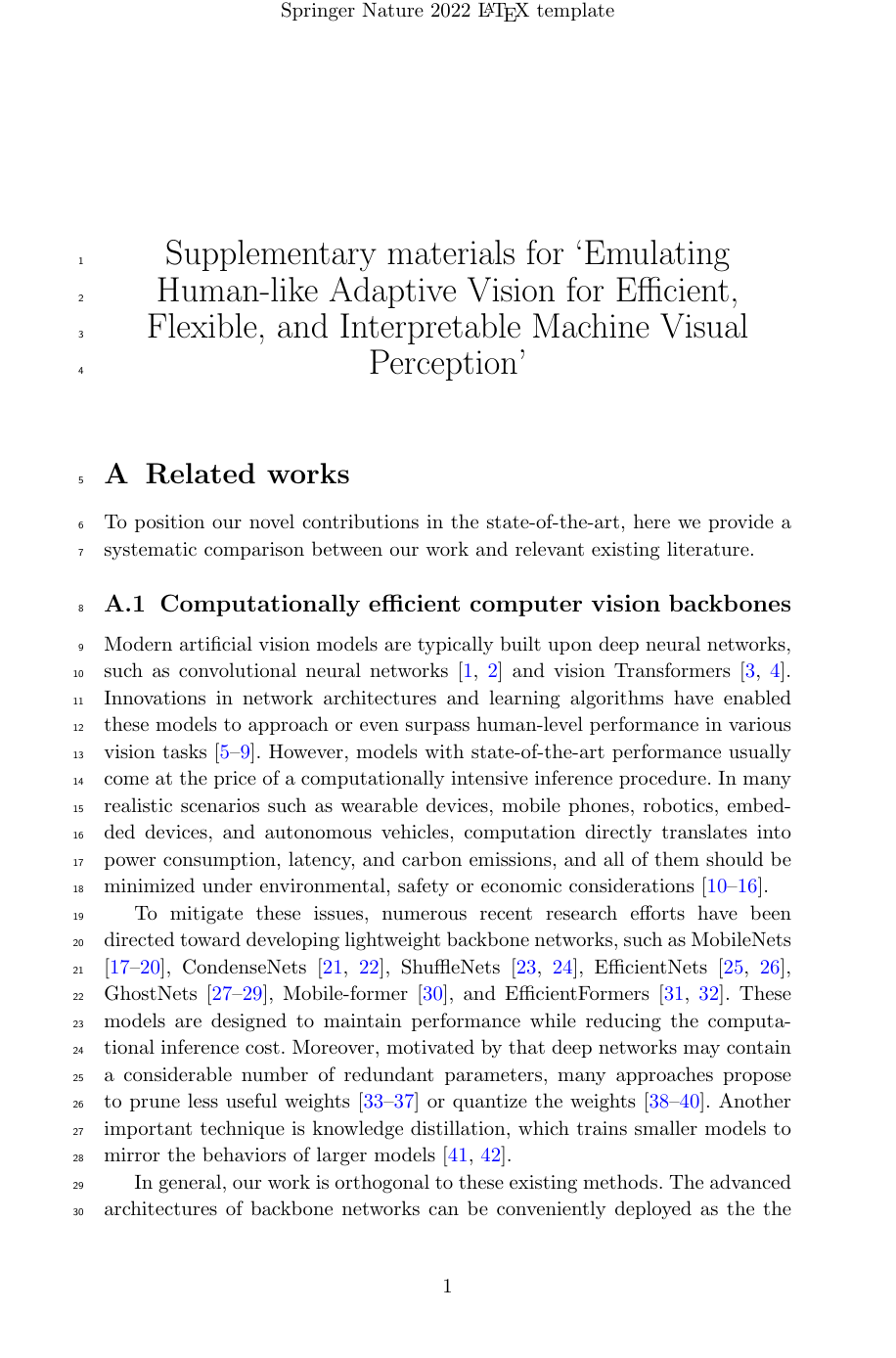, 1-52}

\end{document}